\newtheorem{theorem}{Theorem}
\newtheorem{lemma}{Lemma}
\newtheorem{definition}{Definition}
\newtheorem{remark}{Remark}
\begin{document}

\title{Distributed No-Regret Learning for Multi-Stage Systems with End-to-End Bandit Feedback}

\author{I-Hong Hou}
\email{ihou@tamu.edu}
\affiliation{%
  \institution{Department of ECE, Texas A\& M University}
  \city{College Station}
  \state{Texas}
  \country{USA}
}

\begin{abstract}
This paper studies multi-stage systems with end-to-end bandit feedback. In such systems, each job needs to go through multiple stages, each managed by a different agent, before generating an outcome. Each agent can only control its own action and learn the final outcome of the job. It has neither knowledge nor control on actions taken by agents in the next stage. The goal of this paper is to develop distributed online learning algorithms that achieve sublinear regret in adversarial environments. 

The setting of this paper significantly expands the traditional multi-armed bandit problem, which considers only one agent and one stage. In addition to the exploration-exploitation dilemma in the traditional multi-armed bandit problem, we show that the consideration of multiple stages introduces a third component, education, where an agent needs to choose its actions to facilitate the learning of agents in the next stage. To solve this newly introduced exploration-exploitation-education trilemma, we propose a simple distributed online learning algorithm, $\epsilon-$EXP3. We theoretically prove that the $\epsilon-$EXP3 algorithm is a no-regret policy that achieves sublinear regret. Simulation results show that the $\epsilon-$EXP3 algorithm significantly outperforms existing no-regret online learning algorithms for the traditional multi-armed bandit problem. 
\end{abstract}
\maketitle

\section{Introduction} \label{section:intro}

In many modern applications, a job consists of multiple stages that need to be performed by different agents, and the decision made in each stage can impact the performance of the job. For example, consider a mobile edge computing application where a mobile user offloads video analytic jobs to nearby edge servers, and each edge server is equipped with multiple video analytic neural networks with different precision and latency. To process a video analytic job, the mobile user needs to first decide which edge server to forward this job to. After the mobile user forwards the job to an edge server, the edge server needs to decide which neural network to employ for this job. The performance of the job depends on the accuracy of the result and the end-to-end latency, which includes both communication and computation delays. As another example, consider packet deliveries in multi-hop networks consisting of multiple routers. Upon receiving a packet, a router needs to decide which router to forward the packet to. The performance of the packet depends on the end-to-end latency.

This paper studies the problem of designing distributed online learning algorithms under which all agents jointly learn the optimal decisions with minimum coordination, even when the outcomes of decisions are determined by an adversary. Developing such algorithms are challenging due to three major reasons. First, in most computer and network systems, it is desirable to employ distributed algorithms where each agent can only make decisions of its own action and has neither knowledge nor control on the actions taken by agents in the next stages. Second, in many systems, an agent can only observe the end-to-end outcome of the joint effects of all stages, but cannot know how actions taken in each individual stage contribute to the end-to-end outcome. 
Finally, an agent can only learn the outcome of its chosen action, which is typically referred to as the \emph{bandit feedback} in the literature.


We note that the traditional multi-armed bandit problem is a special case of multi-stage systems when there is only one stage. The main challenge of the traditional multi-armed bandit problem is to balance between learning the outcomes of each possible action (exploration) and choosing the action with the best historic outcomes (exploitation).  The general problem of multi-stage systems is even more challenging because agents in the next stage are also learning agents and their ability to learn depends on actions taken by the agent in the previous stage. In the example of mobile edge computing, an edge server can only process a job and learn the outcome when it receives a job from the mobile user. When a mobile user receives a poor outcome from an edge server, it may be because the edge server has yet to learn the optimal action and chooses a bad neural network, rather than because the edge server has no good options. To ensure that all edge servers can learn the optimal actions, the mobile user needs to \emph{educate} edge servers by forwarding a sufficient number of jobs to each of them. Thus, the mobile user is facing an exploration-exploitation-education trilemma.

To study the online learning problem in multi-stage systems, we propose an analytical model that captures both the distributed decision making and the end-to-end bandit feedback. We first consider the simplified case when each agent can observe the outcomes of all its actions, including those not taken. We show that we can achieve sublinear regret by making all agents employ the Normalized Exponential Gradient (normalized-EG) algorithm independently in a distributed fashion.

Next, we study the multi-stage system with only end-to-end bandit feedback, that is, an agent can only observe an outcome if it receives a job and it can only observe the outcome of its chosen action. To address the exploration-exploitation-education trilemma, we propose a simple distributed online learning algorithm called $\epsilon-$EXP3. The $\epsilon-$EXP3 algorithm has two operation modes, a uniform selection mode in which the agent chooses actions uniformly at random to provide equal education to agents in the next stage, and an EXP3 mode where the agent employs a variation of the EXP3 algorithm to balance the tradeoff between exploration and exploitation. By randomly alternating between these two modes, the $\epsilon-$EXP3 algorithm explicitly address all three of exploration, exploitation, and education. We theoretically prove that, when applying $\epsilon-$EXP3 on a system with $L$ stages, the regret accumulated over $T$ rounds is at most $O(T^{\frac{L}{L+1}})=o(T)$.

To understand the fundamental regret lower bounds and the role of education in multi-stage systems, we study a class of time-homogeneous oracle policies. These policies assume that each node can know the outcome of all actions \emph{before} making a decision. Therefore, there is no need to explore and each node only faces an education-exploitation dilemma. We show that the regret of these policies is at least $\Theta(T^{\frac{L-1}{L}})$, which is only slightly better than the regret of $\epsilon-$EXP3.

The utility of the $\epsilon-$EXP3 algorithm is further evaluated by simulations. The simulation results show that the regret of the $\epsilon-$EXP3 algorithm indeed scales as $O(T^{\frac{L}{L+1}})$. We also evaluate two other policies that are no-regret policies for the traditional one-stage bandit problem. Surprisingly, we show that their regrets scale as $\theta(T)$ even when the system has only two stages. The simulation results demonstrate that the education component is indeed critical in multi-stage systems.

The rest of the paper is organized as follows. Section~\ref{section:related} surveys existing studies on adversarial bandit problems, mobile edge computing, and multi-hop networks. Section~\ref{section:model} introduces our system model and problem definition. Section~\ref{section:omd} studies the simplified case with complete one-hop feedback. Section~\ref{section:exp3} introduces and analyzes the $\epsilon-$EXP3 algorithm for systems with end-to-end bandit feedback. Section~\ref{section:lowerbound} establishes a regret lower bound for time-homogeneous oracle policies. Section~\ref{section:simulation} presents our simulation results. Finally, Section~\ref{section:conclusion} concludes the paper.
\section{Related Work}  \label{section:related}

\textbf{No-regret bandit learning.} 
The multi-armed bandit problem has attracted significant research interests because it elegantly captures the trade-off between exploration and exploitation. In adversarial environments, the celebrated EXP3 algorithm has been proved to achieve a regret bound of $O(T^{\frac{1}{2}})$ \cite{auer2002nonstochastic}. This bound has later been shown to be tight \cite{audibert2009minimax}. There have been many studies on variations and improvements of the EXP3 algorithm \cite{audibert2010regret, neu2015explore, uchiya2010algorithms, wei2018more}. All these studies only consider systems with one agent.

There have been considerable recent efforts on cooperative learning \cite{cesa2020cooperative, chawla2020gossiping, newton2022asymptotic, agarwal2022multi, landgren2021distributed, madhushani2021one, martinez2019decentralized, liu2010distributed} where agents help each other find the optimal action. These studies assume that the reward of an agent only depends on the action of that agent. In contrast, our work allows different agents to have different sets of actions and considers that the reward in each round depends on the actions of all agents. Singla, Hassani, and Krause \cite{singla2018learning} has studied a distributed learning problem in two-stage systems. It is limited to the special case of two stages and requires the root node to have the ability to block feedback information.


\noindent\textbf{Mobile edge computing.} 
One emerging application of mobile edge computing is cloud/edge robotics where a robot offloads its computation tasks to nearby edge servers. 
An important challenge for cloud/edge robotics is that the performance of a job depends on both the quality of the outcome and the end-to-end delay. To enable flexible trade-off between quality and latency, Jiang et al. \cite{jiang2018chameleon} has proposed a controller that dynamically select the suitable neural network configuration. Wu et al. \cite{wu2021soudain} has modeled the problem of adaptive configuration as an integer programming problem and proposed a heuristic for it. He et al. \cite{he2022adaconfigure} has employed a reinforcement learning approach for adaptive configuration. Zhang et al. \cite{zhang2021adaptive} has employed Lyapunov optimization to learn the optimal configuration over time. These studies only study the decisions of edge servers and they only consider stationary systems. Chinchali et al. \cite{chinchali2021network} has proposed using deep reinforcement learning for the offloading decisions of robots, but it does not consider the adaptive configuration of edge servers. To the best of our knowledge, no existing work has jointly optimized the offloading decision of robots and the adaptive configuration of edge servers in unknown and time-varying environments.

\noindent\textbf{Multi-hop networks.} There have been significant interests in employing online learning or reinforcement learning techniques for multi-hop networks, but few of them have been able to characterize end-to-end delay and enforcing end-to-end deadline. Bhorkar and Javidi \cite{bhorkar2010no} has proposed a no-regret learning policy for minimizing end-to-end transmission cost. Park, Kang, and Joo \cite{park2021learning} has proposed a UCB-based algorithm for throughput-optimality in multi-hop wireless networks. Al Islam et al. \cite{al2012multi} has considered the problem of end-to-end congestion control problem in multi-hop networks as a multi-armed bandit problem. Zhang, Tang, and Wang \cite{zhang2020cooperative} has studied the problem of relay selection to minimize energy consumption in two-hop networks. None of the aforementioned studies consider end-to-end delay or end-to-end deadline.

Mao, Koksal, and Shroff \cite{mao2014optimal}, Deng, Zhao, and Hou \cite{deng2019online}, and Gu, Liu, Shen \cite{gu2021asymptotically} have all studied online scheduling and routing algorithms for multi-hop networks with end-to-end deadlines, but they require precise knowledge on the capacity and latency of each link. HasanzadeZonuzy, Kalathil, and Shakkottai \cite{hasanzadezonuzy2020reinforcement} has proposed a model-based reinforcement learning algorithm for real-time multi-hop networks but it only works for stationary systems. Both Lin and van der Schaar \cite{lin2010autonomic} and Shiang, and van der Schaar \cite{shiang2010online} employ reinforcement learning to serve delay-sensitive traffic by modeling multi-hop networks as stationary MDPs with unknown kernels. To the best of our knowledge, no existing work has studied the regret of delay-sensitive multi-hop networks in adversarial environments.

\section{System Model}
\label{section:model}

We represent a multi-stage system as a tree with depth $L+1$. We denote the root node by $r$ and the set of leaf nodes by $\mathcal{L}$. We use $\mathcal{C}_i$ to denote the set of children of a non-leaf node $i$. 
In each round $t$, the root node $r$ receives a job. It selects a child node $f[r,t]\in \mathcal{C}_r$, possibly at random, and forwards the job to it. Likewise, every non-leaf node $i$ randomly selects a child node $f[i,t]\in \mathcal{C}_i$ and, if $i$ receives a job in round $t$, forwards the job to $f[i,t]$. When the job reaches a leaf node $j$, it generates a cost of $c[j,t]\in [0,1]$. The value of $c[j,t]$ is revealed to all nodes between the root and the leaf node $j$ through an end-to-end feedback message.

We note that each node only has limited feedback information in this setting. In particular, if a node receives a job in round $t$, then it will only know its own choice and the final cost. It has neither knowledge nor control on the choices made by its children. This is to reduce coordination overhead and to protect privacy. If a node does not receive a job in a round, then it will not receive any feedback information.

To see how our model can be used to capture mobile edge computing, we can consider the example shown in Fig.~\ref{fig:ex_mec_system}. In this system, a robot chooses one of two edge servers to offload its video analytic jobs. Each edge server has two neural networks to choose from. This system can be modeled as a tree with $L=2$ as shown in Fig.~\ref{fig:ex_mec_tree}. In Fig.~\ref{fig:ex_mec_tree}, the robot is the root that chooses between child $A$ and child $B$. Each of these child nodes corresponds to an edge server, and each child node chooses between two leaf nodes. Each leaf node is labeled by $X:n$, where $X$ indicates the edge server chosen by the robot, and $n$ indicates the neural network chosen by the edge server. The cost of a leaf node is chosen to reflect the delay and the quality of the outcome of the video analytic job.

\begin{figure}[h]
    \centering
    \subfigure[System illustration]
    {
    \includegraphics[width=0.7\linewidth]{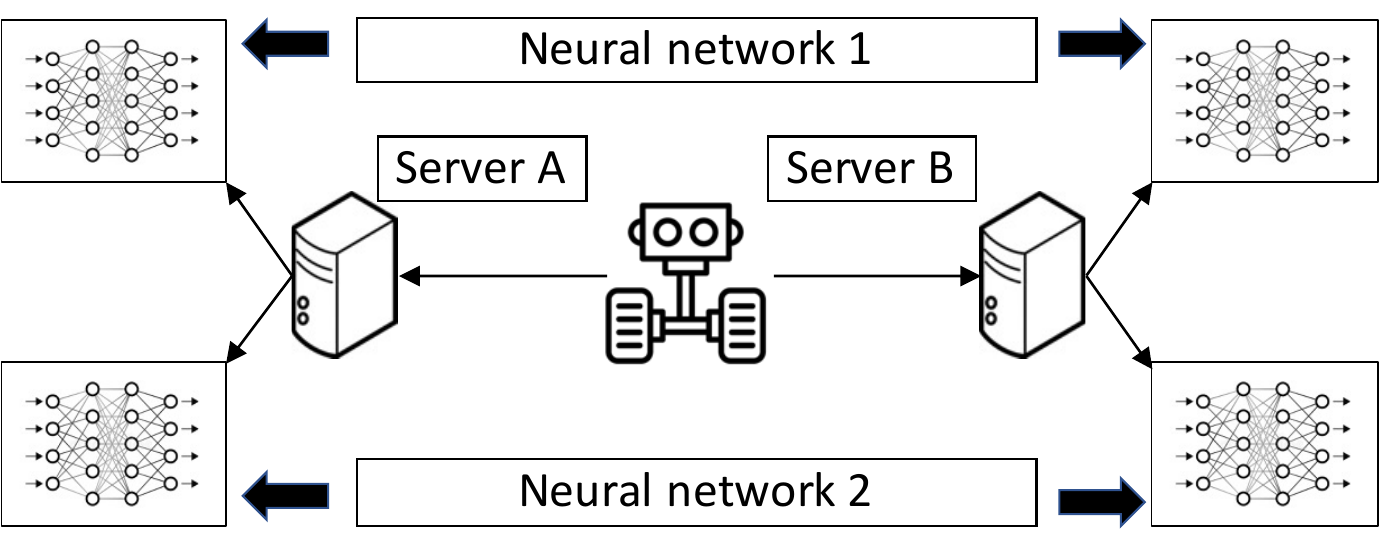}
    \label{fig:ex_mec_system}
    }
    \subfigure[Tree model]
    {
    \includegraphics[width=0.50\linewidth]{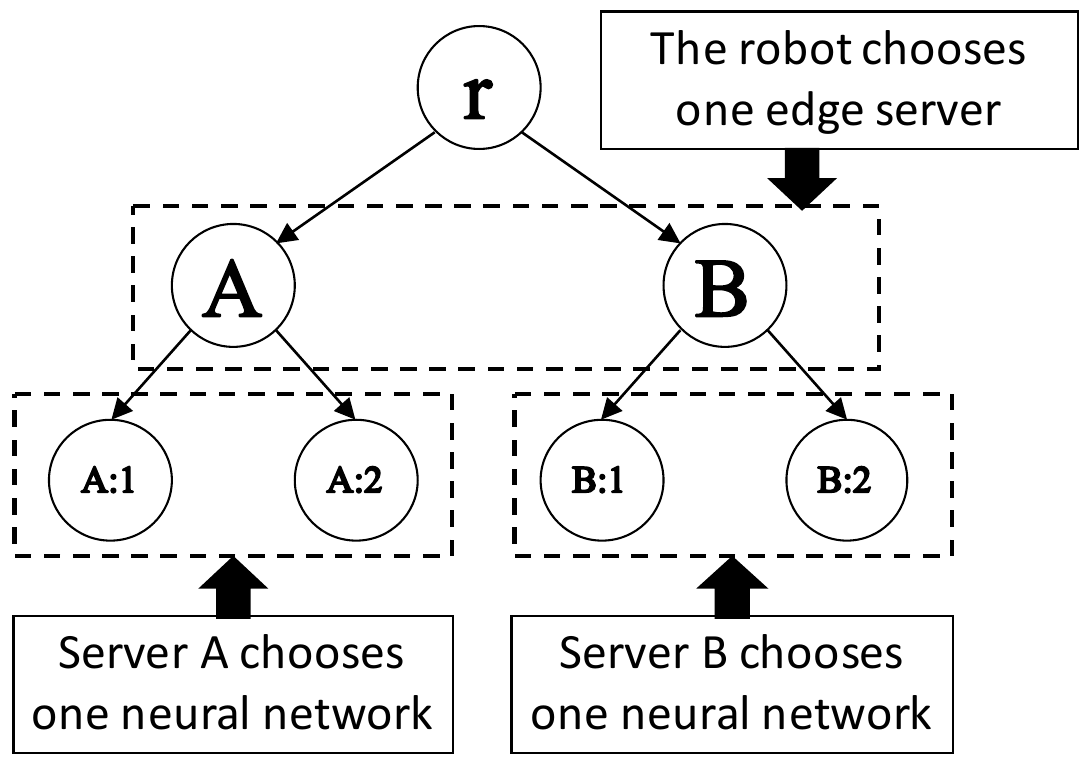}
    \label{fig:ex_mec_tree}
    }
    \caption{A mobile edge computing system and its tree model}
    \label{fig:ex_mec}
\end{figure}

This model can also be used to capture multi-hop networks. In multi-hop networks, the root is the source that generates packets to be delivered. Each non-leaf node corresponds to the path used to transfer a packet to an intermediate router. The choice of that non-leaf node corresponds to choosing the next-hop by the intermediate router. Each leaf node is a complete path from the source to the destination and its cost can be chosen to reflect end-to-end delay. We note that we do not require the topology of the multi-hop networks to be trees. Even when the topology of a network is not a tree, the set of all loop-free paths from the source to the destination can still be represented as a tree. 

Each non-leaf node $i$ employs a distributed online policy that determines the probability of forwarding a job to a child node $j$ in round $t$, denoted by $x[i,j,t]:=Prob(f[i,t]=j)$, in the event that $i$ receives a job. We have $x[i,j,t]\geq 0$ and $\sum_{j\in \mathcal{C}_i}x[i,j,t]=1$. Node $i$ needs to determine the values of $x[i,j,t]$ using only the information available up to round $t-1$.

We now characterize the performance of a distributed online policy after it determines the values of $x[i,j,t]$ and selects $f[i,t]$ accordingly. 
Let $y[i,t]$ be the random variable indicating the amount of cost that would be incurred if node $i$ receives a job in round $t$, under the probability distribution of $f[i,t]$. By definition, we have $y[j,t]=c[j,t]$ for each leaf node $j\in \mathcal{L}$. For each non-leaf node $i$, $y[i,t]$ can be calculated recursively through $y[i,t]=y[f[i,t],t]$. Also, let $w[i,t]:=E\Big[y[i,t]\Big|\mathcal{H}_{t-1}\Big]$ be the conditional expected amount of cost incurred if node $i$ receives a job in round $t$, given all events up to round $t-1$, denoted by $\mathcal{H}_{t-1}$. The value of $w[i,t]$ can then be calculated recursively by $w[j,t]=c[j,t]$ for each leaf node $j$ and $w[i,t] = \sum_{j\in \mathcal{C}_i}x[i,j,t]w[j,t]$ for each non-leaf node $i$. The total expected cost incurred by the distributed online policy over a time horizon of $T$ rounds can then be written as $\sum_{t=1}^T E\Big[y[r,t]\Big]=\sum_{t=1}^T E\Big[w[r,t]\Big]$.

We compare the cost of a distributed online policy against a stationary policy where each node selects the same child node in each round $t$, i.e., $f[i,t]\equiv f_i, \forall t$. Under a stationary policy, all jobs will reach the same leaf node $j^*$, and hence the total cost incurred by the stationary policy is $\sum_{t=1}^Tc[j^*,t]$. The optimal stationary policy is the one that has a minimum cost among all stationary policies and its cost is $\min_{j\in\mathcal{L}} \sum_{t=1}^Tc[j,t]$. We therefore define the regret of a distributed online policy as $\sum_{t=1}^T E\Big[y[r,t]\Big]-\min_{j\in\mathcal{L}} \sum_{t=1}^Tc[j,t]$. Our goal is to design a \emph{no-regret} policy whose regret is sublinear in $T$ under all possible vectors of $c[j,t]$:

\begin{definition}
A distributed online policy is said to be a no-regret policy if $\sum_{t=1}^T E\Big[y[r,t]\Big]-\min_{j\in\mathcal{L}} \sum_{t=1}^Tc[j,t]=o(T)$.
\end{definition}


\section{Preliminary: Policy with Complete One-Hop Feedback}
\label{section:omd}

In this section, we first study the simplified case where each non-leaf node has complete one-hop feedback from its children. Specifically, each non-leaf node $i$, regardless whether it receives a job or not, will be able to learn the values of $y[j,t]$ for each of its children $j\in \mathcal{C}_i$ after $i$ chooses $f[i,t]$. Node $i$ can then use these values to update the values of $x[i,j,t+1]$. We emphasize that the communication overhead between a child node $j$ and its parent node contains only one single scalar $y[j,t]$ in each round. Hence, the feedback information that a non-leaf node has is still limited. For example, a non-leaf node has neither knowledge nor control over actions taken by its children.

We consider that each non-leaf node $i$ independently employs the Normalized Exponential Gradient (normalized-EG) algorithm, a special case of the Online Mirror Descent algorithm and the Follow-the-Regularized-Leader algorithm. Under the normalized-EG algorithm, each non-leaf node $i$ maintains a variable $\theta[i,j,t]$ for each $j\in\mathcal{C}_i$ by setting $\theta[i,j,1]=0$ and $\theta[i,j,t]=\theta[i,j,t-1]-y[j,t-1]$ for all $t>1$. It then chooses $x[i,j,t]=\frac{e^{\eta_i\theta[i,j,t]}}{\sum_{k\in\mathcal{C}_i}e^{\eta_i\theta[i,k,t]}}$ in each round $t$, where $\eta_i$ is a constant whose value will be determined later. The normalized-EG algorithm is an online policy because $\theta[i,j,t]$ can be calculated only based on $y[j,1], y[j,2], \dots, y[j,t-1]$. A formal description of the normalized-EG algorithm is presented in Alg.~\ref{alg:normalized-EG}.

\begin{algorithm}[h]
   \caption{Distributed Normalized Exponential Gradient}
   \label{alg:normalized-EG}
\begin{algorithmic}[1]
\STATE $\eta_i\leftarrow$ a pre-determined constant 
\STATE $\theta[i,j]\leftarrow0, \forall j\in\mathcal{C}_i$
\FOR{each round $t$}
\STATE $x[i,j]\leftarrow\frac{e^{\eta_i\theta[i,j]}}{\sum_{k\in\mathcal{C}_i}e^{\eta_i\theta[i,k]}}, \forall j\in\mathcal{C}_i$  
\STATE Select a child $f[i]$ with $Prob(f[i]=j)=x[i,j]$ 
\FOR{each $j\in\mathcal{C}_i$} 
\STATE Obtain $y[j]$ from child $j$
\STATE $\theta[i,j]\leftarrow \theta[i,j]-y[j]$
\ENDFOR
\STATE $y[i]\leftarrow y[f[i]]$ 
\STATE Report $y[i]$ to the parent node
\ENDFOR
\end{algorithmic}
\end{algorithm}

The regret of the normalized-EG algorithm has been extensively studied for the special case when $L=1$. We will further show that the normalized-EG algorithm is a no-regret policy for the general case $L>1$. It is important to note that the values of $y[j,t]$ observed by $i$ under the normalized-EG algorithm can be different from those under the optimal stationary policy. This is because the values of $y[j,t]$ depend on the decisions made by children nodes $j$. To distinguish between these two policies, we let $y_n[j,t]$ be the values of $y[j,t]$ under the normalized-EG algorithm and let $y_*[j,t]$ be those under the optimal stationary policy. 

Since the normalized-EG algorithm is updated with respect to $y_n[j,t]$, we let $\mathcal{Y}_n[i,t]:=\{y_n[j,\tau],\forall j\in \mathcal{C}_i, \tau\in[1,t]\}$ be the sequences of costs of all children of $i$ up to round $t$ and have the following from existing studies:

\begin{lemma}[\cite{shalev2012online}, Theorem 2.22]   \label{lemma:omd}
If $y_n[j,\tau]\geq0$ for all $j\in \mathcal{C}_i$ and $\tau\in[1, T]$, then the expected total cost incurred by $i$ given $\mathcal{Y}_n[i]$ is upper-bounded by:
\begin{align}
    \sum_{t=1}^T E\Big[y_n[i,t]\Big|\mathcal{Y}_n[i,t]\Big]
    \leq& \min_{j\in\mathcal{C}_i}\sum_{t=1}^Ty_n[j,t]+\frac{\log |\mathcal{C}_i|}{\eta_i}\nonumber\\
    &+\eta_i\sum_{t=1}^T\sum_{j\in\mathcal{C}_i}x[i,j,t]{y_n[j,t]}^2.\nonumber
\end{align}
Moreover, if $y_n[j,\tau]\in[0,1],\forall j\in \mathcal{C}_i, \tau\in[1,T]$, then setting $\eta_i=\sqrt{\frac{\log |\mathcal{C}_i|}{T}}$ yields:
\begin{align}
    &\sum_{t=1}^T E\Big[y_n[i,t]\Big|\mathcal{Y}_n[i,t]\Big]
    \leq \min_{j\in\mathcal{C}_i}\sum_{t=1}^Ty_n[j,t]+2\sqrt{T\log |\mathcal{C}_i|}.\nonumber
\end{align}
$\Box$
\end{lemma}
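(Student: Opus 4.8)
The plan is to recognize Lemma~\ref{lemma:omd} as the standard regret guarantee for the exponentiated-gradient (Hedge) update and to reprove it by a potential (weight-tracking) argument. First I would reduce to an oblivious full-information problem: conditioning on the realized loss sequence $\mathcal{Y}_n[i,t]$ fixes each cost vector $(y_n[j,t])_{j\in\mathcal{C}_i}$, which does not depend on $i$'s own draw $f[i,t]$ because a child's cost is governed only by that child's downstream choices. Since $x[i,j,t]$ is determined by the costs up to round $t-1$, the expected per-round cost is simply the inner product $E\big[y_n[i,t]\,\big|\,\mathcal{Y}_n[i,t]\big]=\sum_{j\in\mathcal{C}_i}x[i,j,t]\,y_n[j,t]$. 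The nonnegativity hypothesis $y_n[j,\tau]\ge 0$ guarantees $\eta_i y_n[j,t]\ge 0$, which is exactly what the elementary inequalities below require.

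Next I would introduce the total weight $W_t:=\sum_{j\in\mathcal{C}_i}e^{\eta_i\theta[i,j,t]}$ and track its one-step multiplicative change. Because $\theta[i,j,t+1]=\theta[i,j,t]-y_n[j,t]$, the ratio becomes $\frac{W_{t+1}}{W_t}=\sum_{j\in\mathcal{C}_i}x[i,j,t]\,e^{-\eta_i y_n[j,t]}$. Applying $e^{-z}\le 1-z+z^2$ for $z\ge 0$ and then $\log(1+u)\le u$ yields the per-round bound
\begin{align}
\log\frac{W_{t+1}}{W_t}\le -\eta_i\sum_{j\in\mathcal{C}_i}x[i,j,t]\,y_n[j,t]+\eta_i^2\sum_{j\in\mathcal{C}_i}x[i,j,t]\,y_n[j,t]^2.\nonumber
\end{align}
Summing over $t=1,\dots,T$ telescopes the left-hand side to $\log W_{T+1}-\log W_1$. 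I would then lower-bound $\log W_{T+1}\ge \eta_i\theta[i,j^*,T+1]=-\eta_i\sum_{t=1}^T y_n[j^*,t]$ for an arbitrary fixed child $j^*$, and use $W_1=|\mathcal{C}_i|$. Rearranging, dividing by $\eta_i>0$, and taking the minimum over $j^*\in\mathcal{C}_i$ produces the first displayed inequality, with the quadratic term carrying coefficient exactly $\eta_i$.

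For the refined bound I would specialize to $y_n[j,\tau]\in[0,1]$, so that $y_n[j,t]^2\le y_n[j,t]\le 1$ and hence $\sum_{j\in\mathcal{C}_i}x[i,j,t]\,y_n[j,t]^2\le 1$; the last term is therefore at most $\eta_i T$. The remaining overhead $\frac{\log|\mathcal{C}_i|}{\eta_i}+\eta_i T$ is convex in $\eta_i$ and minimized at $\eta_i=\sqrt{\log|\mathcal{C}_i|/T}$, where it equals $2\sqrt{T\log|\mathcal{C}_i|}$, giving both the stated choice of $\eta_i$ and the $2\sqrt{T\log|\mathcal{C}_i|}$ regret.

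There is no deep obstacle here, since this is a cited textbook result; the points requiring care are bookkeeping ones. The first is selecting the elementary bound with the correct constant: using $e^{-z}\le 1-z+z^2$ (rather than the tighter $1-z+z^2/2$) is what makes the squared-loss term appear with coefficient $\eta_i$, so that the subsequent tuning lands exactly on $2\sqrt{T\log|\mathcal{C}_i|}$ rather than a different constant. The second is the probabilistic justification that conditioning on $\mathcal{Y}_n[i,t]$ legitimately turns the problem into a deterministic regret statement for $i$ alone---that is, that the children's realized costs are independent of $i$'s action so that node $i$'s analysis decouples from the rest of the tree. Once both are in place, the argument is the classical Hedge analysis.
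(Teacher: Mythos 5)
Your proof is correct, but it takes a different route from the paper, which in fact offers no proof at all for this lemma: the result is imported verbatim from \cite{shalev2012online} (Theorem 2.22), where it arises as the entropic-regularizer special case of Online Mirror Descent / Follow-the-Regularized-Leader, proved via a strong-convexity (local-norm) analysis. What you give instead is the classical multiplicative-weights potential argument (Littlestone--Warmuth / Freund--Schapire style): track $W_t=\sum_{j\in\mathcal{C}_i}e^{\eta_i\theta[i,j,t]}$, bound $\log(W_{t+1}/W_t)$ with $e^{-z}\le 1-z+z^2$ and $\log(1+u)\le u$, telescope, and compare against the single-weight lower bound $\log W_{T+1}\ge -\eta_i\sum_t y_n[j^*,t]$. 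The two routes yield identical bounds, and your bookkeeping is exactly right: the deliberately loose inequality $e^{-z}\le 1-z+z^2$ (valid precisely because $z=\eta_i y_n[j,t]\ge 0$, which is where the nonnegativity hypothesis enters) reproduces the coefficient $\eta_i$ on the quadratic term, and tuning $\frac{\log|\mathcal{C}_i|}{\eta_i}+\eta_i T$ lands on the stated $\eta_i=\sqrt{\log|\mathcal{C}_i|/T}$ and constant $2$. You also correctly flag the one point that is specific to this paper's multi-stage setting rather than the textbook one-agent setting: the reduction $E\big[y_n[i,t]\,\big|\,\mathcal{Y}_n[i,t]\big]=\sum_{j\in\mathcal{C}_i}x[i,j,t]y_n[j,t]$ requires that the children's realized costs not depend on node $i$'s own draw, which holds here because under complete one-hop feedback every child updates and reports regardless of whether it receives the job. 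The benefit of your approach is that it makes the lemma self-contained and elementary; the benefit of the paper's citation route is that the OMD/FoReL machinery generalizes immediately to other regularizers and to the local-norm refinements used elsewhere in that literature.
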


Under the optimal stationary policy, each node will choose to forward the job to the child that incurs the least cost through all $T$ rounds. Hence, we have $\sum_{t=1}^T y_*[i,t]=\min_{j\in\mathcal{C}_i}\sum_{t=1}^Ty_*[j,t].$ We now prove that the normalized-EG algorithm is still a no-regret policy for the multi-stage system:

\begin{theorem} \label{theorem:regret of normalized-EG}
If each non-leaf node has as most $D$ children, then, by setting $\eta_i=\sqrt{\frac{\log |\mathcal{C}_i|}{T}}, \forall i$, the expected cost incurred by the root node $r$ is upper-bounded by:
\begin{equation}
    \sum_{t=1}^T E\Big[y_n[r,t]\Big]\leq \min_{j\in\mathcal{L}}\sum_{t=1}^Tc[j,t]+2L\sqrt{T\log D}.
\end{equation}
\end{theorem}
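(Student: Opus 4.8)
The plan is to prove the statement by induction on the number of hops separating a node from the leaves beneath it, with Lemma~\ref{lemma:omd} serving as the single-node building block. For a node $i$, let $\ell(i)$ denote the number of hops from $i$ to any leaf in its subtree (so $\ell(i)=0$ for a leaf and $\ell(r)=L$), and let $\mathcal{L}_i$ be the set of leaves descended from $i$. The induction hypothesis I would carry is
\[
\sum_{t=1}^T E\big[y_n[i,t]\big]\leq \min_{j\in\mathcal{L}_i}\sum_{t=1}^T c[j,t]+2\ell(i)\sqrt{T\log D}.
\]
The base case $\ell(i)=0$ is immediate, since $y_n[i,t]=c[i,t]$ for a leaf and the additive term vanishes. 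Specializing the hypothesis to $i=r$ with $\ell(r)=L$ recovers exactly the claimed bound, so it suffices to establish the inductive step.

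For the inductive step, fix a non-leaf node $i$ and condition on the realized cost sequences $\mathcal{Y}_n[i]$ of its children. Because each child independently runs normalized-EG on its own feedback and, in the complete one-hop feedback setting, updates every round regardless of whether it actually receives a job, the children's cost processes $\{y_n[j,t]\}_{j\in\mathcal{C}_i}$ are well defined and do not depend on $i$'s own decisions. Hence Lemma~\ref{lemma:omd} applies to node $i$ verbatim and gives, conditioned on $\mathcal{Y}_n[i]$,
\[
\sum_{t=1}^T E\big[y_n[i,t]\,\big|\,\mathcal{Y}_n[i,t]\big]\leq \min_{j\in\mathcal{C}_i}\sum_{t=1}^T y_n[j,t]+2\sqrt{T\log|\mathcal{C}_i|}.
\]
Taking the full expectation over the children's randomness (via the tower property on the left) and applying the elementary inequality $E[\min_j Z_j]\leq \min_j E[Z_j]$ with $Z_j=\sum_{t}y_n[j,t]$ turns the right-hand side into $\min_{j\in\mathcal{C}_i}\sum_{t=1}^T E[y_n[j,t]]+2\sqrt{T\log|\mathcal{C}_i|}$.

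Finally I would invoke the induction hypothesis on each child $j$, whose hop-count is $\ell(i)-1$, substitute the resulting bound, and pull the common additive constant $2(\ell(i)-1)\sqrt{T\log D}$ outside the minimum over $j$. Using $\mathcal{L}_i=\bigcup_{j\in\mathcal{C}_i}\mathcal{L}_j$ collapses the nested minima $\min_{j\in\mathcal{C}_i}\min_{k\in\mathcal{L}_j}$ into $\min_{k\in\mathcal{L}_i}$, while $|\mathcal{C}_i|\leq D$ replaces $2\sqrt{T\log|\mathcal{C}_i|}$ by $2\sqrt{T\log D}$; the two additive terms then sum to $2\ell(i)\sqrt{T\log D}$, closing the induction. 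The step I expect to demand the most care is the passage from the \emph{conditional} guarantee of Lemma~\ref{lemma:omd} to an \emph{unconditional} bound: I must justify that the children's cost sequences are independent of $i$'s actions so that conditioning on $\mathcal{Y}_n[i]$ is harmless, and I must apply $E[\min]\leq\min E$ in the correct direction. Everything else is routine bookkeeping with the tree structure.
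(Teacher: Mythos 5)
Your proposal is correct and follows essentially the same route as the paper's own proof: an induction over the tree (the paper indexes nodes by hops from the root, you by hops to the leaves, which is the same thing), with Lemma~\ref{lemma:omd} as the per-node building block, the tower property plus $E[\min_j Z_j]\leq \min_j E[Z_j]$ to pass from the conditional guarantee to an unconditional one, and the bound $|\mathcal{C}_i|\leq D$ to uniformize the additive terms. The only cosmetic difference is that you phrase the induction hypothesis via $\min_{j\in\mathcal{L}_i}\sum_t c[j,t]$ while the paper uses the optimal stationary cost $\sum_t y_*[i,t]$; these coincide because the optimal stationary policy at each node selects its best child.
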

\begin{proof}
We will prove the theorem by establishing the following statement by induction: If a node $i$ is $(L-h)$-hops from the root node $r$, then $\sum_{t=1}^T E\Big[y_n[i,t]\Big]\leq \sum_{t=1}^Ty_*[i,t]+2h\sqrt{T\log D}$. Please see Appendix~\ref{appendix:thm1} for details.
\end{proof}

\section{Policy with End-to-End Bandit Feedback}
\label{section:exp3}

In this section, we consider the case where each non-leaf node only has bandit feedback. Specifically, if a node does not receive a job in round $t$, then it will not get any feedback. If a node receives a job and forwards it to a child node $j=f[i,t]$, then it will only learn the value of $y[j,t]$. As discussed in earlier sections, online policies with end-to-end bandit feedback faces a trilemma between exploration, i.e., choosing a child to learn its cost, exploitation, i.e., choosing a child to incur low cost, and education, i.e., choosing a child so that it has a chance to learn and improve its policy.

We propose a simple distributed online learning policy to address the exploration-exploitation-education trilemma called the $\epsilon-$EXP3 algorithm. Under the $\epsilon-$EXP3 algorithm, each non-leaf node $i$ maintains a variable $\theta[i,j,t]$ for each $j\in\mathcal{C}_i$, which it will use to determine $x[i,j,t]$. When a node $i$ sends a job to a child node $j=f[i,t]$, node $i$ also includes a variable $v[j,t]$ indicating the probability that the child node $j$ receives a job in round $t$. Since a node $j$ will receive a job if its parent node $i$ receives a job and node $i$ chooses $j$, the value of $v[j,t]$ can be calculated by $v[j,t]=v[i,t]x[i,j,t]$.

We now discuss how a non-leaf node $i$ decides $f[i,t]$ in each round $t$. There are two modes for choosing $f[i,t]$ and node $i$ randomly decides which mode to operate in in each $t$. Each node $i$ is assigned two pre-determined constants $\epsilon_i$ and $\eta_i$. With probability $\epsilon_i$, node $i$ is in the \emph{uniform selection mode} and it chooses $f[i,t]$ uniformly at random from its children, that is, $Prob(f[i,t]=j)=1/|\mathcal{C}_i|, \forall j\in \mathcal{C}_i$. With probability $1-\epsilon_i$, node $i$ is in the \emph{EXP3 mode} and it chooses $f[i,t]=j$ with probability $\frac{e^{\eta_i\theta[i,j,t]}}{\sum_{k\in\mathcal{C}_i}e^{\eta_i\theta[i,k,t]}}$. We use $m[i,t]\in\{U,E\}$ to denote the mode of node $i$, where $U$ is the uniform selection mode and $E$ is the EXP3 mode. Combining these two modes and we have $x[i,j,t]=\epsilon_i \frac{1}{|\mathcal{C}_i|}+(1-\epsilon_i)\frac{e^{\eta_i\theta[i,j,t]}}{\sum_{k\in\mathcal{C}_i}e^{\eta_i\theta[i,k,t]}}$. 

After choosing $f[i,t]$ for each node $i$, we can set $y_\epsilon[i,t]=c[i,t]$ for each leaf node and set $y_\epsilon[i,t]=y_\epsilon[f[i,t],t]$ for each non-leaf node, where the subscript $\epsilon$ is to highlight that this corresponds to the values of $y[j,t]$ under the $\epsilon-$EXP3 algorithm. We note that, even if node $i$ does not receive a job in round $t$, the value of $y_\epsilon[i,t]$ is still well-defined, but node $i$ does not know its value.

Finally, we discuss how node $i$ determines $\theta[i,j,t]$. Node $i$ initializes $\theta[i,j,1]=0$ for all children $j$. If node $i$ receives a job in round $t$, then it learns the value of $y_\epsilon[f[i,t],t]$. Node $i$ sets $z[f[i,t],t]=\frac{y_\epsilon[f[i,t],t]|\mathcal{C}_i|}{v[i,t]}$, if $m[i,t]=U$, and sets $z[f[i,t],t]=\frac{y_\epsilon[f[i,t],t]\sum_{k\in\mathcal{C}_i}e^{\eta_i\theta[i,k,t]}}{v[i,t]e^{\eta_i\theta[i,j,t]}}$, if $m[i,t]=E$. Node $i$ sets $z[j,t]=0$ for all $j\neq f[i,t]$. On the other hand, if node $i$ does not receive a job in round $t$, then it sets $z[j,t]=0,\forall j\in\mathcal{C}_i$. Finally, it sets $\theta[i,j,t+1]=\theta[i,j,t]-z[j,t], \forall j\in\mathcal{C}_i$.

Alg.~\ref{alg:exp3} describes the $\epsilon-$EXP3 algorithm in detail, where we streamline some of the steps for easier implementation.

\begin{algorithm}[h]
   \caption{$\epsilon$-EXP3}
   \label{alg:exp3}
\begin{algorithmic}[1]
\STATE $\eta_i, \epsilon_i\leftarrow$ pre-determined constants 
\STATE $\theta[i,j]\leftarrow0, \forall j\in\mathcal{C}_i$
\FOR{each round $t$}
\IF{Node $i$ receives a job and $v[i,t]$ from its parent} 
\STATE $x[i,j]\leftarrow \epsilon_i \frac{1}{|\mathcal{C}_i|}+(1-\epsilon_i)\frac{e^{\eta_i\theta[i,j]}}{\sum_{k\in\mathcal{C}_i}e^{\eta_i\theta[i,k]}}, \forall j\in\mathcal{C}_i$
\STATE $v[j,t]\leftarrow v[i,t]x[i,j], \forall j\in\mathcal{C}_i$
\STATE Randomly select $m[i]\in \{U,E\}$ with $Prob(m[i] = U)=\epsilon_i$
\IF{$m[i] = U$}   
\STATE Select a child $f[i]\in\mathcal{C}_i$ uniformly at random
\STATE Forward the job and $v[f[i],t]$ to child $f[i]$ and obtain $y_\epsilon[f[i]]$ from $f[i]$
\STATE $\theta[i, f[i]]\leftarrow \theta[i, f[i]]-\frac{y_\epsilon[f[i]]|\mathcal{C}_i|}{v[i,t]}$
\STATE Return $y_\epsilon[i]\leftarrow y_\epsilon[f[i]]$ to the parent
\ELSE 
\STATE Select a child $f[i]\in\mathcal{C}_i$ with $Prob(f[i]=j)=\frac{e^{\eta_i\theta[i,j]}}{\sum_{k\in\mathcal{C}_i}e^{\eta_i\theta[i,k]}}$
\STATE Forward the job and $v[f[i],t]$ to child $f[i]$ and obtain $y_\epsilon[f[i]]$ from $f[i]$
\STATE $\theta[i, f[i]]\leftarrow \theta[i, f[i]]-\frac{y_\epsilon[f[i]]\sum_{k\in\mathcal{C}_i}e^{\eta_i\theta[i,k]}}{v[i,t]e^{\eta_i\theta[i,j]}}$
\STATE Return $y_\epsilon[i]\leftarrow y_\epsilon[f[i]]$ to the parent
\ENDIF
\ENDIF
\ENDFOR
\end{algorithmic}
\end{algorithm}

\begin{remark}
The reason that the $\epsilon-$EXP3 algorithm has two different modes to choose $f[i,t]$ is to address the exploration-exploitation-education trilemma. When node $i$ is in the uniform selection mode, its goal is to provide equal education to all its children. Hence, it selects $f[i,t]$ uniformly at random so that each child node has the same chance of receiving a job and learning from its outcome. When node $i$ is in the EXP3 mode, its goal is to balance the trade-off between exploration and exploitation. Hence, it employs a very similar way of choosing $f[i,t]$ as the EXP3 algorithm. The value of $\epsilon_i$ determines the portion of time that node $i$ dedicate to education. On the other hand, the value of $\eta_i$ determines the trade-off between exploration and exploitation when node $i$ is in the EXP3 mode, where larger $\eta_i$ means more emphasis on exploitation. The values of $\epsilon_i$ and $\eta_i$ will be determined later.
\end{remark}

We now analyze the regret of $\epsilon-$EXP3. Our first step is to establish some properties of $z[j,t]$. We let $\mathcal{Y}_\epsilon[i,t]:=\{y_\epsilon[j,\tau],\forall j\in \mathcal{C}_i, \tau\in[1,t]\}$ be the sequences of costs of all children of $i$ up to round $t$ and let $\mathcal{Z}[i,t]:=\{z[j,\tau],\forall j\in \mathcal{C}_i, \tau\in[1,t]\}$ be all the values of $z[j,\tau]$ that has been observed by $i$ up to round $t$. We then have the following:
\begin{lemma}\label{lemma:z[j,t]}
For any non-leaf node $i$,
\begin{equation}
    E\Big[z[j,t]\Big|\mathcal{Y}_\epsilon[i,t],\mathcal{Z}[i,t-1]\Big]=y_\epsilon[j,t],\label{equation:z[j,t]}
\end{equation}
and
\begin{align}
    &E\Big[z[j,t]^2\Big|\mathcal{Y}_\epsilon[i,t],\mathcal{Z}[i,t-1]\Big]\nonumber
    \\
    =&\Big(\epsilon_i|\mathcal{C}_i|+(1-\epsilon_i) \frac{\sum_{k\in\mathcal{C}_i}e^{\eta_i\theta[i,k,t]}}{e^{\eta_i\theta[i,j,t]}}\Big)\frac{y_\epsilon[j,t]^2}{v[i,t]}.    \label{equation:z[j,t]^2}
\end{align}
\end{lemma}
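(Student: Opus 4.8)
The plan is to compute both conditional expectations directly, by averaging $z[j,t]$ over the only sources of randomness that remain once we condition on $\mathcal{Y}_\epsilon[i,t]$ and $\mathcal{Z}[i,t-1]$. First I would record what the conditioning fixes: since $\theta[i,\cdot,t]$ is obtained deterministically from the past increments stored in $\mathcal{Z}[i,t-1]$, the EXP3 selection probabilities $e^{\eta_i\theta[i,j,t]}/\sum_{k\in\mathcal{C}_i}e^{\eta_i\theta[i,k,t]}$ are determined, the arrival probability $v[i,t]$ is a function of the ancestors' distributions (hence of the history), and the costs $y_\epsilon[j,t]$ are supplied by $\mathcal{Y}_\epsilon[i,t]$. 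All of these may therefore be treated as constants, and the residual randomness consists of three mutually independent ingredients: whether $i$ receives a job (a Bernoulli event of probability $v[i,t]$, governed by the ancestors), the mode $m[i,t]\in\{U,E\}$, and the child $f[i,t]$ (the latter two governed by $i$'s own independent coins).

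Next I would decompose over the disjoint events on which $z[j,t]$ is nonzero. By construction $z[j,t]=0$ whenever $i$ receives no job or $f[i,t]\neq j$, so the only contributing events are ``$i$ receives a job, $m[i,t]=U$, and $f[i,t]=j$,'' which occurs with probability $v[i,t]\,\epsilon_i\,\tfrac{1}{|\mathcal{C}_i|}$ and on which $z[j,t]=\tfrac{y_\epsilon[j,t]|\mathcal{C}_i|}{v[i,t]}$, and ``$i$ receives a job, $m[i,t]=E$, and $f[i,t]=j$,'' which occurs with probability $v[i,t]\,(1-\epsilon_i)\,p_j$ where $p_j:=e^{\eta_i\theta[i,j,t]}/\sum_{k\in\mathcal{C}_i}e^{\eta_i\theta[i,k,t]}$ and on which $z[j,t]=\tfrac{y_\epsilon[j,t]}{v[i,t]\,p_j}$. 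For the first moment I would multiply each value by its probability and sum; in each branch the probability of selecting $j$ cancels exactly against the importance-weight denominator, leaving $\epsilon_i\,y_\epsilon[j,t]$ from the uniform branch and $(1-\epsilon_i)\,y_\epsilon[j,t]$ from the EXP3 branch, which add to $y_\epsilon[j,t]$ and give \eqref{equation:z[j,t]}.

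For the second moment I would repeat the same decomposition but square each realized value of $z[j,t]$ before weighting it by its probability. Now only one factor of the selection probability cancels, so the uniform branch contributes $v[i,t]\,\epsilon_i\,\tfrac{1}{|\mathcal{C}_i|}\cdot\tfrac{y_\epsilon[j,t]^2|\mathcal{C}_i|^2}{v[i,t]^2}=\epsilon_i|\mathcal{C}_i|\,\tfrac{y_\epsilon[j,t]^2}{v[i,t]}$ and the EXP3 branch contributes $v[i,t]\,(1-\epsilon_i)\,p_j\cdot\tfrac{y_\epsilon[j,t]^2}{v[i,t]^2 p_j^2}=(1-\epsilon_i)\tfrac{1}{p_j}\,\tfrac{y_\epsilon[j,t]^2}{v[i,t]}$; substituting $1/p_j=\sum_{k\in\mathcal{C}_i}e^{\eta_i\theta[i,k,t]}/e^{\eta_i\theta[i,j,t]}$ and factoring out $y_\epsilon[j,t]^2/v[i,t]$ yields exactly \eqref{equation:z[j,t]^2}. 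The computation itself is routine importance-sampling bookkeeping; the one place that needs care is the conditioning, namely justifying that the job-arrival event at $i$ is independent of $i$'s own mode/child coins and of the subtree costs $y_\epsilon[j,t]$, and that its conditional probability equals $v[i,t]$. This independence is precisely what makes $z[j,t]$ an unbiased estimator, and it is the only step that requires an argument beyond algebra.
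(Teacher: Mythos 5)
Your proposal is correct and follows essentially the same route as the paper's own proof: both decompose the expectation over the mode $m[i,t]\in\{U,E\}$ and the event that node $i$ receives a job and selects child $j$, then use the importance-weighting cancellation (full cancellation for the first moment, partial for the second) to obtain Eq.~(\ref{equation:z[j,t]}) and Eq.~(\ref{equation:z[j,t]^2}). Your added remark that one must justify treating $v[i,t]$, $\theta[i,\cdot,t]$, and $y_\epsilon[j,t]$ as fixed under the conditioning, and the arrival event as independent of $i$'s own coins, is a point the paper leaves implicit, but it does not change the argument.
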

\begin{proof}
Please see Appendix~\ref{appendix:lemma2}. 
\end{proof}

Next, we show that, if node $i$ is in the EXP3 mode at round $t$, then its expected cost is the same as the expected cost of running the normalized-EG algorithm against the sequence $z[j,t]$.

\begin{lemma} \label{lemma:y_n vs y_e}
By considering a sequence $y_n[j,\tau]=z[j,\tau], \forall j\in \mathcal{C}_i, \tau\in[1,T]$ for the normalized-EG algorithm,
\begin{align}
    &E\Big[y_\epsilon[i,t]\Big|m[i,t]=E,\mathcal{Y}_\epsilon[i,t],\mathcal{Z}[i,t-1]\Big]\nonumber\\
    =&E\Big[E\Big[y_n[i,t]\Big|\mathcal{Y}_n[i,t]=\mathcal{Z}[i,t]\Big]\Big],\nonumber
\end{align}
where the outer expectation on the right hand side is taken with respect to $z[j,t]$.
\end{lemma}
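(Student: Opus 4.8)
The plan is to show that both sides of the claimed identity equal the single quantity $\sum_{j\in\mathcal{C}_i}q_j\,y_\epsilon[j,t]$, where $q_j:=\frac{e^{\eta_i\theta[i,j,t]}}{\sum_{k\in\mathcal{C}_i}e^{\eta_i\theta[i,k,t]}}$ denotes the EXP3-mode selection probability at round $t$. The crucial structural observation I would exploit is that the iterate of $\epsilon$-EXP3 obeys $\theta[i,j,t+1]=\theta[i,j,t]-z[j,t]$, which is exactly the normalized-EG update driven by the surrogate losses $z[j,\cdot]$ in place of the true losses. Consequently, once $\mathcal{Z}[i,t-1]$ is fixed, the two algorithms share the same $\theta[i,j,t]$, hence the same selection distribution $q_j$, and this identification is what ties the two sides together.

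First I would evaluate the left-hand side. Conditioned on $m[i,t]=E$ together with the histories $\mathcal{Y}_\epsilon[i,t]$ and $\mathcal{Z}[i,t-1]$, node $i$ forwards the job to child $j$ with probability $q_j$ (independently of the true cost vector), and the incurred cost is $y_\epsilon[f[i,t],t]$, whose per-child values $y_\epsilon[j,t]$ are fixed by $\mathcal{Y}_\epsilon[i,t]$. Averaging only over the child draw therefore gives $E[y_\epsilon[i,t]\mid m[i,t]=E,\mathcal{Y}_\epsilon[i,t],\mathcal{Z}[i,t-1]]=\sum_{j\in\mathcal{C}_i}q_j\,y_\epsilon[j,t]$.

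Next I would evaluate the right-hand side. Feeding the sequence $y_n[j,\tau]=z[j,\tau]$ into normalized-EG makes its $\theta[i,j,t]$ identical to that of $\epsilon$-EXP3, so its selection probability is again $q_j$. Conditioned on $\mathcal{Y}_n[i,t]=\mathcal{Z}[i,t]$, which fixes $z[j,t]$, the only remaining randomness is the normalized-EG child draw, so the inner expectation is $E[y_n[i,t]\mid\mathcal{Y}_n[i,t]=\mathcal{Z}[i,t]]=\sum_{j\in\mathcal{C}_i}q_j\,z[j,t]$. Taking the outer expectation with respect to $z[j,t]$ under the same conditioning $\mathcal{Y}_\epsilon[i,t],\mathcal{Z}[i,t-1]$, and noting that each $q_j$ is a deterministic function of $\mathcal{Z}[i,t-1]$ and hence frozen, I would pull $q_j$ outside and apply the unbiasedness identity \eqref{equation:z[j,t]} of Lemma~\ref{lemma:z[j,t]} to obtain $\sum_{j\in\mathcal{C}_i}q_j\,E[z[j,t]\mid\mathcal{Y}_\epsilon[i,t],\mathcal{Z}[i,t-1]]=\sum_{j\in\mathcal{C}_i}q_j\,y_\epsilon[j,t]$, matching the left-hand side.

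The step I expect to require the most care is the conditioning bookkeeping rather than any analytic estimate. I must keep $q_j$ measurable with respect to $\mathcal{Z}[i,t-1]$ so that it remains constant under the outer expectation over $z[j,t]$, and I must take that outer expectation under precisely the conditioning $(\mathcal{Y}_\epsilon[i,t],\mathcal{Z}[i,t-1])$ used in Lemma~\ref{lemma:z[j,t]}, so that $E[z[j,t]]=y_\epsilon[j,t]$ applies verbatim. There is no delicate inequality involved; the entire content is the identification of the two selection distributions through their shared $\theta$ recursion, followed by one invocation of the unbiasedness of $z[j,t]$.
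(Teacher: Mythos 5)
Your proposal is correct and follows essentially the same route as the paper's own proof: both identify the shared $\theta$-recursion (hence shared selection distribution), evaluate the left side as $\sum_{j\in\mathcal{C}_i}q_j\,y_\epsilon[j,t]$, write the inner expectation on the right side as $\sum_{j\in\mathcal{C}_i}q_j\,z[j,t]$, and then apply the unbiasedness identity of Lemma~\ref{lemma:z[j,t]} to conclude. Your added care about the $\mathcal{Z}[i,t-1]$-measurability of the selection probabilities makes explicit a step the paper leaves implicit, but it is the same argument.
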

\begin{proof}
Please see Appendix~\ref{appendix:lemma3}. 
\end{proof}

Our next step is to bound the difference between $\sum_{t=1}^TE\Big[y_\epsilon[i,t]\Big]$ and $\min_{j\in\mathcal{C}_i}\sum_{t=1}^Ty_\epsilon[j,t]$ under any given given sequence of\\ $y_\epsilon[j,1],\dots, y_\epsilon[j,T]$, for all $j\in\mathcal{C}_i$.

\begin{lemma} \label{lemma:exp3-one-step}
If each non-leaf node has at most $D$ children, then
\begin{align}
    &\sum_{t=1}^TE\Big[y_\epsilon[i,t]\Big|\mathcal{Y}_\epsilon[i,t]\Big]\nonumber\\
    \leq&\min_{j\in\mathcal{C}_i}\sum_{t=1}^Ty_\epsilon[j,t]+\epsilon_i T+\frac{\log D}{\eta_i}+\eta_i{\sum_{t=1}^T\frac{D}{v[i,t]}},\nonumber
\end{align}
for all non-leaf node $i$. Moreover, if the depth of the tree is $L+1$, then setting $\eta_i=T^{-\frac{L}{L+1}}$ for all $i$ and setting $\epsilon_i$ to be 0 if $C_i\subset\mathcal{L}$, and $DT^{-\frac{1}{L+1}}$ otherwise yields
\begin{align}
    &\sum_{t=1}^TE\Big[y_\epsilon[i,t]\Big|\mathcal{Y}_\epsilon[i,t]\Big]-\min_{j\in\mathcal{C}_i}\sum_{t=1}^Ty_\epsilon[j,t]\nonumber\\
    \leq& \begin{cases}
    (D+\log D)T^{\frac{L}{L+1}}, &\textbf{if $C_i\subset\mathcal{L}$},\\
    (2D+\log D)T^{\frac{L}{L+1}} &\textbf{else}.
    \end{cases}\nonumber
\end{align}
\end{lemma}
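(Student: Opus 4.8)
The plan is to treat the realized cost sequences $\{y_\epsilon[j,t]\}_{j\in\mathcal{C}_i,\,t\le T}$ of $i$'s children as fixed (this is exactly what conditioning on $\mathcal{Y}_\epsilon[i]$ means) and to bound the expected cost of node $i$ by splitting its per-round behavior into the two modes. Conditioned on the history $\mathcal{Z}[i,t-1]$, node $i$ incurs expected cost $\sum_{j\in\mathcal{C}_i}x[i,j,t]y_\epsilon[j,t]=\epsilon_i\frac{1}{|\mathcal{C}_i|}\sum_{j}y_\epsilon[j,t]+(1-\epsilon_i)\sum_{j}p[i,j,t]y_\epsilon[j,t]$, where $p[i,j,t]:=\frac{e^{\eta_i\theta[i,j,t]}}{\sum_{k\in\mathcal{C}_i}e^{\eta_i\theta[i,k,t]}}$ is the EXP3-mode distribution. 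Since every $y_\epsilon[j,t]\in[0,1]$, the uniform-mode term is at most $1$ and $(1-\epsilon_i)\le 1$, so the per-round cost is at most $\epsilon_i+\sum_{j}p[i,j,t]y_\epsilon[j,t]$; summing over $t$ isolates the education term $\epsilon_i T$. The remaining EXP3-mode term is exactly the per-round expected cost of the normalized-EG algorithm run against the loss vectors $z[\cdot,t]$: because $p[i,j,t]$ is $\mathcal{Z}[i,t-1]$-measurable, the unbiasedness in Lemma~\ref{lemma:z[j,t]} gives $\sum_{j}p[i,j,t]y_\epsilon[j,t]=E\big[\sum_{j}p[i,j,t]z[j,t]\,\big|\,\mathcal{Z}[i,t-1],\mathcal{Y}_\epsilon[i,t]\big]$, which is precisely the content of Lemma~\ref{lemma:y_n vs y_e}.

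Next I apply the general bound of Lemma~\ref{lemma:omd} to the realized nonnegative sequence $z[j,\tau]=y_n[j,\tau]$. This yields, for each realization of $\mathcal{Z}[i,T]$,
\begin{align}
\sum_{t=1}^T\sum_{j\in\mathcal{C}_i}p[i,j,t]z[j,t]\le\min_{j\in\mathcal{C}_i}\sum_{t=1}^Tz[j,t]+\frac{\log|\mathcal{C}_i|}{\eta_i}+\eta_i\sum_{t=1}^T\sum_{j\in\mathcal{C}_i}p[i,j,t]z[j,t]^2.\nonumber
\end{align}
I then take expectation over $z$ and bound the three terms. For the comparator, concavity of $\min$ gives $E[\min_j\sum_t z[j,t]]\le\min_j E[\sum_t z[j,t]]=\min_j\sum_t y_\epsilon[j,t]$ by the unbiasedness of Lemma~\ref{lemma:z[j,t]}. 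The middle term is handled by $|\mathcal{C}_i|\le D$. For the variance term I condition on the history and invoke the second-moment formula of Lemma~\ref{lemma:z[j,t]}: the factor $\frac{\sum_k e^{\eta_i\theta[i,k,t]}}{e^{\eta_i\theta[i,j,t]}}=1/p[i,j,t]$ in the EXP3-mode contribution cancels the outer weight $p[i,j,t]$, and using $y_\epsilon[j,t]^2\le 1$ together with $\sum_j p[i,j,t]=1$ collapses $\sum_j p[i,j,t]\,E[z[j,t]^2\mid\cdots]$ to $|\mathcal{C}_i|/v[i,t]\le D/v[i,t]$. Combining with the $\epsilon_i T$ education term establishes the first displayed inequality of the lemma.

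For the second part I must convert $\eta_i\sum_t D/v[i,t]$ into an explicit rate, which requires a lower bound on $v[i,t]$. Writing $v[i,t]$ as the product of selection probabilities along the root-to-$i$ path and using $x[a,\cdot,t]\ge\epsilon_a/|\mathcal{C}_a|\ge T^{-1/(L+1)}$ at each ancestor $a$ (the inequality uses $\epsilon_a=DT^{-1/(L+1)}$, which is valid because every strict ancestor of a non-leaf node has a non-leaf child on the path and hence $\mathcal{C}_a\not\subset\mathcal{L}$), I obtain $v[i,t]\ge T^{-(L-1)/(L+1)}$ for every non-leaf $i$, since such $i$ sits at depth at most $L-1$. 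Hence $\eta_i\sum_t D/v[i,t]\le T^{-\frac{L}{L+1}}\cdot D\cdot T\cdot T^{\frac{L-1}{L+1}}=DT^{\frac{L}{L+1}}$. Substituting $\eta_i=T^{-\frac{L}{L+1}}$ makes $\log D/\eta_i=(\log D)T^{\frac{L}{L+1}}$, while $\epsilon_i T$ equals $0$ when $\mathcal{C}_i\subset\mathcal{L}$ and $DT^{\frac{L}{L+1}}$ otherwise; adding the three pieces yields the two stated cases.

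I expect the main obstacle to be the conditional-expectation bookkeeping in the second paragraph: one must apply Lemma~\ref{lemma:omd} pathwise to the random sequence $z$ and only afterward take expectations, being careful about measurability (so that $p[i,j,t]$ passes through the inner expectation) and about the correct direction of Jensen's inequality for the concave $\min$. The one genuinely delicate algebraic point is the variance cancellation, where the importance-weighting in $z[j,t]^2$ is tuned exactly so that the weighted second moment telescopes to $|\mathcal{C}_i|/v[i,t]$ rather than a quantity that grows with $\epsilon_i$; this is precisely what makes the education probability enter only additively through $\epsilon_i T$ while leaving the variance term independent of $\epsilon_i$ after the bound $y_\epsilon[j,t]^2\le 1$.
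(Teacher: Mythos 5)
Your proposal is correct and follows essentially the same route as the paper's proof: apply the normalized-EG bound (Lemma~\ref{lemma:omd}) pathwise to the importance-weighted sequence $z$, use the unbiasedness and second-moment identities of Lemma~\ref{lemma:z[j,t]} (with the $1/p[i,j,t]$ cancellation collapsing the variance term to $|\mathcal{C}_i|/v[i,t]$), fold in the uniform mode as an additive $\epsilon_i T$, and finish with the bound $v[i,t]\geq T^{-\frac{L-1}{L+1}}$. In fact your justification of that last bound is slightly more careful than the paper's, since you note explicitly that every strict ancestor $a$ of a non-leaf node has $\mathcal{C}_a\not\subset\mathcal{L}$ and hence $\epsilon_a=DT^{-\frac{1}{L+1}}$, which is exactly what the ancestor-product argument requires.
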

\begin{proof}
Please see Appendix~\ref{appendix:lemma4}. 
\end{proof}

\begin{remark}
An explanation for the choice of $\epsilon_i$ is in order. We set $\epsilon_i=0$ if all children of node $i$ are leaf nodes. Since leaf nodes do not have any children to choose from, they have nothing to learn and do not need education. Hence, node $i$ can operate exclusively in the EXP3 mode. On the other hand, if node $i$ has some children that are non-leaf nodes, then node $i$ needs to educate these children. Hence, it operates in the uniform selection mode with a constant probability.
\end{remark}

We will now prove that the $\epsilon-$EXP3 policy is a no-regret policy.

\begin{theorem} \label{theorem:eps-EXP3 regret}
If the depth of the tree is $L+1$ and each non-leaf node $i$ has at most $D$ children, then, by using the same settings of $\eta_i$ and $\epsilon_i$ as in Lemma~\ref{lemma:exp3-one-step}, the regret of $\epsilon$-EXP3 is at most $((2L-1)D+L\log D)T^{\frac{L}{L+1}}=o(T)$.
\end{theorem}
\begin{proof}
We will prove the theorem by establishing the following statement: If a node $i$ is $(L-h)$-hops from the root node $r$, then $\sum_{t=1}^T E\Big[y_n[i,t]\Big]\leq \sum_{t=1}^Ty_*[i,t]+((2h-1)D+h\log D)T^{\frac{L}{L+1}}$, where $y_*[i,t]$ is the cost under the optimal stationary policy.

We prove the statement by induction. First, consider the case $h=1$, that is, the node $i$ is $(L-1)$-hops from $r$. Since the tree has depth $L+1$, either $i$ is a leaf node or all children of $i$ are leaf nodes. If $i$ is a leaf node, then $y_n[i,t]=y_*[i,t]=c[i,t]\in[0,1]$ and the statement holds. If all children of $i$ are leaf nodes, then we have $y_n[j,t]=y_*[j,t]=c[j,t]$ for all $j\in\mathcal{C}_i$. Hence, by Lemma~\ref{lemma:exp3-one-step}, 
\begin{align*}
    &\sum_{t=1}^T E\Big[y_\epsilon[i,t]\Big]=\sum_{t=1}^T E\Big[y_\epsilon[i,t]\Big|\mathcal{Y}_\epsilon[i,t]\Big]\\
    \leq& \min_{j\in\mathcal{C}_i}\sum_{t=1}^Ty_\epsilon[j,t]+(D+\log D)T^{\frac{L}{L+1}}\\
    =&\sum_{t=1}^Ty_*[i,t]+(D+\log D)T^{\frac{L}{L+1}},
\end{align*} 
and the statement holds.

We now assume that the statement holds when $h=g$ and consider a node $i$ that is $(L-(g+1))$-hops from $r$. Either $i$ is a leaf node or all children of $i$ are $(L-g)$-hops from $r$. If $i$ is a leaf node, then the statement clearly holds. If $i$ is not a leaf node, then, by the induction hypothesis, we have $\sum_{t=1}^T E\Big[y_\epsilon[j,t]\Big]\leq \sum_{t=1}^Ty_*[j,t]+((2g-1)D+g\log D)T^{\frac{L}{L+1}}$, for all $j\in\mathcal{C}_i$. 
We can then use Lemma~\ref{lemma:exp3-one-step} to establish the following:
\begin{align*}
    &\sum_{t=1}^T E\Big[y_\epsilon[i,t]\Big]=\sum_{t=1}^T E\Big[E\Big[y_\epsilon[i,t]\Big|\mathcal{Y}_\epsilon[i,t]\Big]\Big]\\
    \leq &E\Big[\min_{j\in\mathcal{C}_i}\sum_{t=1}^Ty_\epsilon[j,t]\Big]+(2D+\log D)T^{\frac{L}{L+1}}\\
    \leq& \min_{j\in\mathcal{C}_i}\sum_{t=1}^Ty_*[j,t]+((2g-1)D+g\log D)T^{\frac{L}{L+1}}\\
    &+(2D+\log D)T^{\frac{L}{L+1}}\\
    =&\sum_{t=1}^Ty_*[i,t]+((2g+1)D+(g+1)\log D)T^{\frac{L}{L+1}},
\end{align*}
and the statement holds. By induction, the statement holds for all $h$.

Since the root node $r$ is 0-hop from itself and $\sum_{t=1}^Ty_*[r,t]=\min_{j\in\mathcal{L}}\sum_{t=1}^Tc_{j,t}$, the theorem holds.
\end{proof}

Finally, we note that the $\epsilon-$EXP3 algorithm requires the knowledge of $T$ to set $\epsilon_i$ and $\eta_i$. When $T$ is not known in advance, we can employ the doubling trick to design an anytime algorithm as shown in Algorithm~\ref{alg:anytime-exp3}. This anytime algorithm is also a no-regret policy:

\begin{algorithm}[h]
   \caption{Anytime $\epsilon$-EXP3}
   \label{alg:anytime-exp3}
\begin{algorithmic}[1]
\FOR{m = 0, 1, 2, \dots}
\STATE Set $\epsilon_i$ and $\eta_i$ according to Theorem~\ref{theorem:eps-EXP3 regret}, but replace $T$ with $2^m$
\STATE Run Algorithm~\ref{alg:exp3} on the $2^m$ rounds $t=2^m, 2^m+1, \dots, 2^{m+1}-1$ 
\ENDFOR
\end{algorithmic}
\end{algorithm}

\begin{theorem}
The regret of Algorithm~\ref{alg:anytime-exp3} is at most $\frac{2^{\frac{2L}{L+1}}}{2^{\frac{L}{L+1}}-1}((2L-1)D+L\log D)T^{\frac{L}{L+1}}$.
\end{theorem}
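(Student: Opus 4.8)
The plan is to apply the standard doubling-trick analysis, using Theorem~\ref{theorem:eps-EXP3 regret} as a black box on each phase of Algorithm~\ref{alg:anytime-exp3}. Write $\beta:=\frac{L}{L+1}$ and $K:=(2L-1)D+L\log D$, so that Theorem~\ref{theorem:eps-EXP3 regret} guarantees that a single run of $\epsilon$-EXP3 tuned to a horizon $\tau$ incurs regret at most $K\tau^{\beta}$ against the best fixed leaf for that run. In Algorithm~\ref{alg:anytime-exp3}, phase $m$ is a \emph{fresh, independent} instance of $\epsilon$-EXP3 (a new call to Algorithm~\ref{alg:exp3} resets all weights $\theta[i,j]$ to $0$) run on the $2^m$ rounds $\mathcal{P}_m:=\{2^m,\dots,2^{m+1}-1\}$ with parameters tuned to $\tau=2^m$. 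Hence Theorem~\ref{theorem:eps-EXP3 regret} applies verbatim to each phase.

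First I would decompose the total expected cost over phases and apply the per-phase guarantee,
\[
\sum_{t\in\mathcal{P}_m}E\big[y_\epsilon[r,t]\big]\le \min_{j\in\mathcal{L}}\sum_{t\in\mathcal{P}_m}c[j,t]+K\,2^{m\beta}.
\]
Let $M$ be the index of the phase containing round $T$, so that $2^M\le T$ and the rounds $1,\dots,T$ are covered by phases $0,\dots,M$ (the last phase possibly truncated at $T$; truncation only removes rounds and, since the tuned parameters for $2^M$ bound every term in Lemma~\ref{lemma:exp3-one-step} regardless of how many of the $2^M$ rounds are actually played, cannot increase the per-phase regret beyond $K2^{M\beta}$). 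Summing over $m=0,\dots,M$ and using the subadditivity inequality
\[
\sum_{m=0}^{M}\min_{j\in\mathcal{L}}\sum_{t\in\mathcal{P}_m}c[j,t]\ \le\ \min_{j\in\mathcal{L}}\sum_{t=1}^{T}c[j,t],
\]
which holds because the global minimizer $j^\star$ is a feasible (generally suboptimal) choice within every phase, the per-phase optima collapse onto the single global benchmark and the regret of Algorithm~\ref{alg:anytime-exp3} is bounded by $K\sum_{m=0}^{M}2^{m\beta}$.

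Finally I would sum the geometric series, $\sum_{m=0}^{M}2^{m\beta}\le \frac{2^{(M+1)\beta}}{2^{\beta}-1}$, and convert the phase count back to the horizon using $2^{M}\le T$ (equivalently $2^{M+1}\le 2T$). Substituting this into the geometric bound turns $2^{(M+1)\beta}$ into a constant multiple of $T^{\beta}$ and yields the stated prefactor $\frac{2^{2L/(L+1)}}{2^{L/(L+1)}-1}$ in front of $KT^{\beta}=((2L-1)D+L\log D)T^{\frac{L}{L+1}}$.

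I expect the main obstacle to be conceptual rather than computational: justifying that the per-phase guarantees combine into a guarantee against the \emph{single} global benchmark $\min_{j\in\mathcal{L}}\sum_{t=1}^{T}c[j,t]$. This rests on two points, namely that each restarted phase is a legitimate stand-alone instance for Theorem~\ref{theorem:eps-EXP3 regret} (so no state needs to be carried across phases), and the subadditivity inequality above. The remaining work --- handling the truncated final phase and summing the geometric series --- is routine, with the only care being the crude step relating $2^{M}$ and $T$ that pins down the leading constant.
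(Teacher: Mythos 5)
Your proposal is correct and is essentially the paper's own proof: the paper simply invokes the standard doubling-trick analysis of \cite[Section 2.3.1]{shalev2012online} and omits the details, and what you give --- per-phase application of Theorem~\ref{theorem:eps-EXP3 regret} to fresh restarts, collapsing the per-phase benchmarks onto the global one by subadditivity, handling the truncated last phase, and summing the geometric series --- is exactly that argument. The only remark is that your accounting with $2^{M+1}\le 2T$ actually yields the slightly sharper prefactor $\frac{2^{L/(L+1)}}{2^{L/(L+1)}-1}$ rather than the stated $\frac{2^{2L/(L+1)}}{2^{L/(L+1)}-1}$, which of course still implies the theorem.
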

\begin{proof}
    The proof is very similar to that in \cite[Section 2.3.1]{shalev2012online}, and is hence omitted.
\end{proof}

\section{Regret Lower Bound and the Need for Education} \label{section:lowerbound}

In this section, we establish a regret lower bound of $\Omega(T^{\frac{L-1}{L}})$ for a class of \emph{time-homogeneous oracle policies}. Under this class of policies, each node knows the outcomes of each non-leaf child, $y[i,t]$, \emph{before} selecting a child to forward a job to. Since the outcomes of each non-leaf child is known in advance, there is no need for exploration and each node only faces an education-exploitation dilemma. As we establish a regret lower bound for this class of policies, we also establish the need for education.

\begin{figure}[h]
    \vspace{-10pt}
    \begin{center}
        \includegraphics[width=0.95\linewidth]{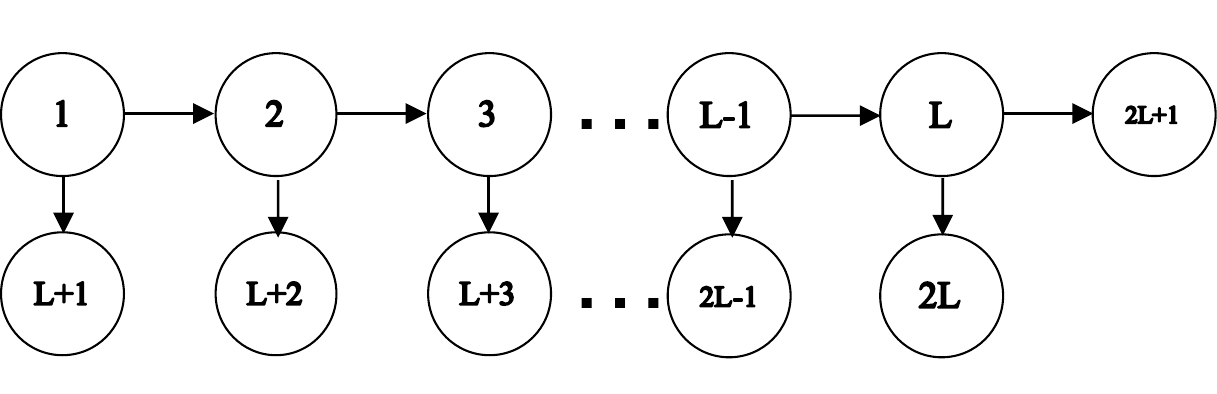}
    \end{center}
    \caption{System illustration for establishing a lower bound}
    \label{fig:lower_bound}
\end{figure}

We consider a system with depth $L+1$ as shown in Fig.~\ref{fig:lower_bound}. There are $L$ non-leaf nodes, numbered as $1, 2, \dots, L$, and $L+1$ leaf nodes, numbered as $L+1, L+2, \dots, 2L+1$. Each non-leaf node $i$ has two children. For each $i\leq L-1$, one child of node $i$ is the leaf node $L+i$ and the other child is the non-leaf node $i+1$. For node $L$, both children, node $2L$ and node $2L+1$, are leaf nodes. When a leaf node $j$ received a job, it generates a cost of 1 with probability $p_j$ and a cost of 0 with probability $1-p_j$. Given a small positive constant $\delta<1/2^{L}$, we set one of $p_{2L}$ and $p_{2L+1}$ to be $(1-2^L\delta)/2$ and the other to be $(1+2^L\delta)/2$, and then set $p_{j}=(1-(2^L-2^{j-L-1})\delta)/2$ for all the other leaf nodes $j=L+1, L+2, \dots, 2L-1$. Hence, we have $(1-2^L\delta)/2<p_{L+1}<p_{L+2}<\dots<p_{2L-1}<(1-2^L\delta)/2$ and $\min_{j\in\mathcal{L}}\sum_{t=1}^TE\big[c[j,t]\big]=(1-2^L\delta)T/2$. The regret of the system is 
\begin{equation}
    \sum_{t=1}^TE\big[y[1,t]\big]-(1-2^L\delta)T/2=\sum_{t=1}^T\Big(E\big[y[1,t]\big]-(1-2^L\delta)/2\Big).
\end{equation}

We now discuss the policies employed by each non-leaf node. Since both children of node $L$ are leaf nodes, node $L$ does not need to consider education. We consider that node $L$ can run an arbitrary online learning algorithm with bandit feedback. For all other non-leaf nodes $i=1, 2, \dots, L-1$, we assume that they employ a time-homogeneous oracle policy defined as follows:

\begin{definition}
    Let $i_1$ and $i_2$ be the two children of node $i$, then a time-homogeneous oracle policy is one that chooses a child to forward a job to at time $t$ with the following assumptions:
    \begin{itemize}
        \item A1: Node $i$ can obtain the expected cost of each child, $E\big[y[i_1,t]\big]$ and $E\big[y[i_2,t]\big]$, before making the forwarding decision.
        \item A2: Node $i$ makes its forwarding decision solely based on $E\big[y[i_1,t]\big]-E\big[y[i_2,t]\big]$. Specifically, let $\zeta:=\Big|E\big[y[i_1,t]\big]-E\big[y[i_2,t]\big]\Big|$, then node $i$ will forward the job to the child with the higher expected cost with probability $P_i(\zeta)$, and to the other child with probability $1-P_i(\zeta)$, where $P_i(\cdot)$ is an arbitrary decreasing function chosen by node $i$.
    \end{itemize}
\end{definition}

We note that A1 provides a node with much more information than is possible in multi-stage systems with bandit feedback, where a node can only obtain the cost of a child \emph{if} it forwards a job to the child, and only \emph{after} it makes the forwarding decision. Thus, intuitively, the regret of policies with A1 serves as a natural lower bound for the regret of policies with end-to-end bandit feedback. The purpose of A2 is to highlight that a node $i$ only knows the expected costs, but not the internal variables of its children.

We also note that policies with A1 do not need to explore, since it knows the expected costs of all children in advance. Hence, policies with A1 only face an education-exploitation dilemma. The only reason that a policy may select a child with a higher expected cost, by choosing $P_i(\eta)>0$, is to educate its children.

We first establish a bound for the expected cost of node $L$, whose children are both leaf nodes. Let $N_L(t)$ be the number of times that node $L$ has received a job from its parent at time $t$. Since node $L$ can only learn the costs of its children when it receives a job, node $L$ cannot determine which of its two children has the smaller $p_j$ when $N_L(t)$ is small. The following lemma formalizes this intuition. 

\begin{lemma}   \label{lemma:node_L_regret}
There exists a positive integer $N_\delta$ such that, for all $t$ with $N_L(t)<N_\delta$, $E\big[y[L,t]\big]>(1-(2^L-2^{L-1})\delta)/2$.
\end{lemma}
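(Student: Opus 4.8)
The plan is to treat node $L$ in isolation as a two-armed bandit whose arms are its two leaf children, and to show that with too few pulls it cannot reliably identify the cheaper arm. First I would translate the target bound into a statement about action probabilities. Writing $p_{\min}=(1-2^L\delta)/2$ and $p_{\max}=(1+2^L\delta)/2$ for the two cost parameters (note $p_{\min}+p_{\max}=1$ and $p_{\max}-p_{\min}=2^L\delta$), and letting $q$ denote the probability that node $L$ forwards to the cheaper child at round $t$, a direct computation gives $E[y[L,t]]=p_{\max}-q\,2^L\delta$. Substituting shows that the claimed inequality $E[y[L,t]]>(1-(2^L-2^{L-1})\delta)/2$ is equivalent to $q<3/4$. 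So it suffices to prove that node $L$ places probability less than $3/4$ on the cheaper child whenever $N_L(t)<N_\delta$.

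The core is an indistinguishability argument via Pinsker's inequality. I would introduce two environments that differ only in which child of $L$ is cheap: environment $A$ assigns $p_{2L}=p_{\min}$, $p_{2L+1}=p_{\max}$, and environment $B$ swaps them. Because node $L$ only has bandit feedback, its arm-choice at its $(k+1)$-st pull is a fixed (possibly randomized) function of its first $k$ observed costs; indexing node $L$'s observations by pull number, the distributions $P_A^n$ and $P_B^n$ of the first $n$ observations are the only thing that differs between the two worlds. By the chain rule for KL divergence, $\mathrm{KL}(P_A^n\|P_B^n)=n\,d$ where $d=\mathrm{KL}(\mathrm{Ber}(p_{\min})\|\mathrm{Ber}(p_{\max}))$, since each pull contributes exactly $d$ regardless of which arm is chosen (the relation $p_{\min}+p_{\max}=1$ makes the per-sample divergence identical for both arms). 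Pinsker's inequality then yields $\|P_A^n-P_B^n\|_{\mathrm{TV}}\le\sqrt{nd/2}$.

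To finish I would symmetrize. Let $q^A_n$ and $q^B_n$ be the probabilities of forwarding to child $2L$ after $n$ pulls under $A$ and $B$; since the arm-choice map is common to both environments and takes values in $[0,1]$, $|q^A_n-q^B_n|\le\|P_A^n-P_B^n\|_{\mathrm{TV}}$. The probability of picking the cheaper child, averaged over the two symmetric worlds, is $\tfrac12\big(q^A_n+(1-q^B_n)\big)=\tfrac12\big(1+q^A_n-q^B_n\big)\le\tfrac12\big(1+\sqrt{nd/2}\big)$, which is below $3/4$ precisely when $\sqrt{nd/2}<1/2$, i.e. $nd<1/2$. Choosing $N_\delta=\lceil 1/(2d)\rceil$ guarantees $n<N_\delta\Rightarrow nd<1/2$; note $d>0$ since $p_{\min}\ne p_{\max}$, and $d=\Theta(2^{2L}\delta^2)$ so $N_\delta$ is a finite positive integer that grows as $\delta\to0$, as expected. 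Since the average of the two worlds exceeds the target, at least one of them does, and the adversary fixing node $L$'s assignment to that world gives the claim.

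The main obstacle I anticipate is rigor in the reduction from the embedded bandit to a standalone one. In the full tree, the real-time round at which node $L$ receives its $n$-th job is determined by the parent's education decisions, which react to the costs node $L$ reports upward; hence the event $\{N_L(t)=n\}$ is correlated with the very observation sequence $o_1,\dots,o_n$ that drives node $L$'s choice. I would handle this by arguing that node $L$'s arm-selection at pull $k$ depends only on $o_1,\dots,o_{k-1}$ and never on the wall-clock times of the pulls, so the pull-indexed observation process has the standalone-bandit law in each environment; the per-pull bound above then holds for every fixed $n<N_\delta$ and can be lifted to the event $\{N_L(t)<N_\delta\}$. Verifying that the KL chain rule remains valid under this adaptive, outer-system-driven sampling schedule is the delicate point.
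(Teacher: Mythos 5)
Your proposal is correct and takes essentially the same route as the paper: the paper's entire proof is a one-line citation to Lemma 3.6 of Bubeck and Cesa-Bianchi (2012), which is precisely the two-environment KL-divergence/Pinsker indistinguishability bound that you reconstruct from first principles (including the reduction of the cost bound to the action-probability bound $q<3/4$ and the observation that $p_{\min}+p_{\max}=1$ makes both arms contribute the same per-sample divergence). The only caveat, inherited from the paper's own loose statement of the lemma, is that the conclusion should be read for the assignment of $p_{2L}, p_{2L+1}$ drawn uniformly at random (the averaged environment, which is all the downstream theorem needs), since a single fixed assignment cannot satisfy the bound for every pull count simultaneously; your final symmetrization sentence should be adjusted accordingly rather than picking one world per $n$.
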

\begin{proof}
This is a direct result of Lemma 3.6 in \cite{bubeck2012regret}.    
\end{proof}

We now establish a regret lower bound for the system in Fig.~\ref{fig:lower_bound}.

\begin{theorem}
    For the system in Fig.~\ref{fig:lower_bound}, the regret is $\Omega(T^{\frac{L-1}{L}})$ for any bandit learning policy employed by node $L$ and any time-homogeneous oracle policies employed by nodes $1, 2,\dots, L-1$. 
\end{theorem}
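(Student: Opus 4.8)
The plan is to track, for every node $i$, its \emph{global} suboptimality $\Delta_i(t):=E\big[y[i,t]\big]-p^*$ relative to the optimal leaf cost $p^*=(1-2^L\delta)/2$, and to exploit the geometric spacing of the costs: the leaf child $L+i$ of an oracle node $i$ has constant suboptimality $2^{i-2}\delta$, while Lemma~\ref{lemma:node_L_regret} says an un-educated node $L$ has $\Delta_L(t)>2^{L-2}\delta$. Since the root receives one job per round and no leaf costs less than $p^*$, the expected regret is at least $\sum_{t=1}^T\Delta_1(t)$, and I would lower bound it by splitting the horizon into a \emph{pre-education} phase (before node $L$ has received $N_\delta$ jobs) and a \emph{post-education} phase, bounding each below and then balancing. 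Throughout, write $q_i:=P_i(2^{i-2}\delta)$ and $Q:=\prod_{i=1}^{L-1}q_i$; these capture how aggressively the oracle nodes route jobs downward.

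\emph{Pre-education phase.} First I would prove by downward induction on $i$ that, as long as node $L$ is not yet educated, node $i$'s non-leaf child $i+1$ is its \emph{worse} child, the gap between its two children is at least $2^{i-2}\delta$, and $\Delta_i(t)\ge 2^{i-2}\delta$. The base case $i=L$ is Lemma~\ref{lemma:node_L_regret}; the step uses $\Delta_{i+1}(t)\ge 2^{i-1}\delta>2^{i-2}\delta$, so the leaf is the better child and $\Delta_i(t)$ is a convex combination of two quantities each $\ge 2^{i-2}\delta$. As $P_i$ is decreasing and the gap exceeds $2^{i-2}\delta$, node $i$ forwards downward with probability at most $q_i$; hence a job reaches node $L$ in a given round with probability at most $Q$. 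Letting $\tau$ be the round in which node $L$ receives its $N_\delta$-th job, a counting (optional-stopping) argument gives $E\big[N_L(t\wedge\tau)\big]\le Q\,t$, so $N_\delta\Pr(\tau\le t)\le Qt$ and $\Pr(\tau\le t)\le Qt/N_\delta$. Since the conditional suboptimality is $\ge\delta/2$ on $\{\tau\ge t\}$,
\begin{align}
\sum_{t=1}^T\Delta_1(t)
\;\ge\;\frac{\delta}{2}\sum_{t=1}^T\Pr(\tau\ge t)
\;\ge\;\frac{\delta}{4}\,\min\!\Big(\frac{N_\delta}{2Q},\,T\Big).\nonumber
\end{align}

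\emph{Post-education phase.} After node $L$ is educated I would show that each oracle node $i$ still forwards a job to a child of global suboptimality at least $2^{i-2}\delta$ with probability at least $q_i$. This needs a short case analysis: if the leaf $L+i$ is the worse child then the gap is at most $2^{i-2}\delta$, so $P_i$ decreasing gives a leak probability $\ge q_i$; if instead node $i+1$ is the worse child then routing to it already costs $\ge 2^{i-2}\delta$. Propagating these leaks down the chain yields a per-round root suboptimality of at least $c\,\delta\sum_{i=1}^{L-1}q_i$, and accumulating over the $\Theta(T)$ post-education rounds gives a post-education regret of at least $\Omega\big(\delta(\sum_i q_i)T\big)\ge\Omega\big(\delta(L-1)\,Q^{1/(L-1)}T\big)$, the last step being AM--GM.

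\emph{Balancing and the main obstacle.} The total regret is at least the larger of $\Omega(\delta N_\delta/Q)$ (when $N_\delta/Q\le T$) and $\Omega\big(\delta(L-1)Q^{1/(L-1)}T\big)$. These two bounds move oppositely in $Q$, and equating them at $Q\asymp\big(N_\delta/((L-1)T)\big)^{(L-1)/L}$ makes both equal to $\Omega\big(T^{(L-1)/L}\big)$; since any actual policy fixes some $Q$ and the regret is at least the minimum over $Q$, the bound $\Omega(T^{(L-1)/L})$ follows for any fixed $\delta<2^{-L}$. I expect the post-education step to be the main obstacle: unlike the pre-education phase, one must control \emph{which} child of each oracle node is currently the worse one (this depends on the fluctuating suboptimality of the entire subtree below), and one must argue rigorously that a constant fraction of the $T$ rounds genuinely lie after education (the complement of the concentration bound $\Pr(\tau\le t)\le Qt/N_\delta$ from the pre-education phase). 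The pre-education counting and the geometric choice of the $p_j$'s are comparatively routine once the structural induction is in place.
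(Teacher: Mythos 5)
Your proposal is correct and follows essentially the same route as the paper's own proof: the same pre-/post-education decomposition at the time node $L$ has received $N_\delta$ jobs (Lemma~\ref{lemma:node_L_regret}), the same downward induction exploiting the geometric spacing of the $p_j$'s to bound the downward-forwarding probabilities by $q_i=P_i(2^{i-2}\delta)$, a per-round suboptimality of $\delta/2$ before education and of order $\delta\max_i q_i \ge \delta Q^{1/(L-1)}$ after it, and the same balancing of $\delta N_\delta/Q$ against $\delta Q^{1/(L-1)}T$ to get $\Omega(T^{\frac{L-1}{L}})$. If anything, your optional-stopping bound on the education time and your explicit two-case analysis in the post-education phase (covering the possibility that node $i+1$ is currently the worse child) are slightly more careful than the paper's corresponding steps, which assert $E[T_\delta]\ge N_\delta/\prod_{i=1}^{L-1} q_i$ and that node $i$ selects node $i+1$ with probability at most $1-q_i$.
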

\begin{proof}
Let $T_\delta$ be the time at which $N_L(t)=N_\delta$. By Lemma~\ref{lemma:node_L_regret}, $E\big[y[L,t]\big]>(1-(2^L-2^{L-1})\delta)/2$ for any $t<T_\delta$.

We first study the system behavior before time $T_\delta$. Consider the forwarding decision of node $L-1$ at any time $t<T_\delta$. Node $L-1$ has two children. One is the leaf node $2L-1$ with $E\big[y[2L-1,t]\big]=p_{2L-1}=(1-(2^L-2^{L-2})\delta)/2$. The other is the non-leaf node $L$ with $E\big[y[L,t]\big]>(1-(2^L-2^{L-1})\delta)/2=p_{2L-1}+2^{L-3}\delta$. By A2, the probability that node $L-1$ selects node $L$ is at most $q_{L-1}:=P_{L-1}(2^{L-3}\delta)$. We also have $E\big[y[L-1,t]\big]\geq p_{2L-1}=(1-(2^L-2^{L-2})\delta)/2$.

We further analyze the forwarding decision of node $i<L-1$ at any time $t<T_\delta$. Using a simple induction argument, it can be shown that the probability that node $i$ selects node $i+1$ is at most $q_{i}:=P_i(2^{i-2}\delta)$, and $E\big[y[i,t]\big]\geq p_{L+i}=(1-(2^L-2^{i-1})\delta)/2$. Therefore, at any time $t<T_\delta$, we have
\begin{equation}\label{eq:lowerboundproof1}
    E\big[y[1,t]\big] - (1-2^L\delta)/2\geq \delta/2.
\end{equation}

Moreover, since node $L$ can only receive a job if, for each $i\leq L-1$, node $i$ selects node $i+1$, which happens with probability at most $q_i$, we have
\begin{equation}\label{eq:lowerboundproof2}
    E\big[T_\delta]\geq \frac{N_\delta}{\prod_{i=1}^{L-1}q_i}.
\end{equation}

Next, we analyze the system behavior after time $T_\delta$. For any time $t>T_\delta$, $E\big[y[L,t]\big]\geq \min\{p_{2L}, p_{2L+1}\}=(1-2^L\delta)/2$. Consider the forwarding decision of node $L-1$. Since $E\big[y[2L-1,t]\big]=p_{2L-1}=(1-(2^L-2^{L-2})\delta)/2\leq E\big[y[L,t]\big]+2^{L-3}\delta$, the probability that node $L-1$ selects node $L$ is at most $1-P_{L}(2^{L-3}\delta)=1-q_{L-1}$. Using a simple induction argument, we can further show that the probability that node $i$ selects node $i+1$ is at most $1-q_i$, for all $i\leq L-1$. Hence, the probability that node $L$ receives a job is at most $\prod_{i=1}^{L-1}(1-q_i)$. If node $L$ does not receive a job, which happens with probability at least $1-\prod_{i=1}^{L-1}(1-q_i)$, then the expected cost is at least $\min_{j\in\{L+1, L+2, \dots, 2L-1\}}p_j=(1-(2^L-1)\delta)/2$. We then have, at any time $t>T_\delta$,
\begin{equation}\label{eq:lowerboundproof3}
    E\big[y[1,t]\big] - (1-2^L\delta)/2\geq \big(1-\prod_{i=1}^{L-1}(1-q_i)\big)\delta/2.
\end{equation}

Combining Eq. (\ref{eq:lowerboundproof1}), (\ref{eq:lowerboundproof2}), and (\ref{eq:lowerboundproof3}) and we have the following regret bound
\begin{align}\label{eq:lowerboundproof4}
    &\sum_{t=1}^T\Big(E\big[y[1,t]\big]-(1-2^L\delta)/2\Big)\nonumber\\
    \geq& E\big[\sum_{t=1}^{T_\delta}\delta/2+\sum_{t=T_\delta+1}^T\big(1-\prod_{i=1}^{L-1}(1-q_i)\big)\delta/2\big]\nonumber\\
    \geq&\frac{\delta}{2}[\frac{N_\delta}{\prod_{i=1}^{L-1}q_i}+(T-\frac{N_\delta}{\prod_{i=1}^{L-1}q_i})\big(1-\prod_{i=1}^{L-1}(1-q_i)\big)].
\end{align}

It is then straightforward to show that $\frac{\delta}{2}[\frac{N_\delta}{\prod_{i=1}^{L-1}q_i}+(T-\frac{N_\delta}{\prod_{i=1}^{L-1}q_i})\big(1-\prod_{i=1}^{L-1}(1-q_i)\big)]=\Omega(T^{\frac{L-1}{L}})$. Moreover, setting $q_i=\Theta(T^{-\frac{1}{L}})$, for all $i\leq L-1$, makes $\frac{\delta}{2}[\frac{N_\delta}{\prod_{i=1}^{L-1}q_i}+(T-\frac{N_\delta}{\prod_{i=1}^{L-1}q_i})\big(1-\prod_{i=1}^{L-1}(1-q_i)\big)]=\Theta(T^{\frac{L-1}{L}})$.
\end{proof}

{Before closing the section, we note that the lower-bound analysis in this section is limited to time-homogeneous policies. We make this assumption to explicitly prevent a parent node from using history to imply internal variables of its children. Extending our analysis to time-varying policies will be interesting future work.}
\section{Simulation Results}
\label{section:simulation}

\begin{figure*}[t]
    \begin{center}
    \subfigure[$D=2, L=2, p_{min}=0.2$]
    {
       \includegraphics[width=0.3\linewidth]{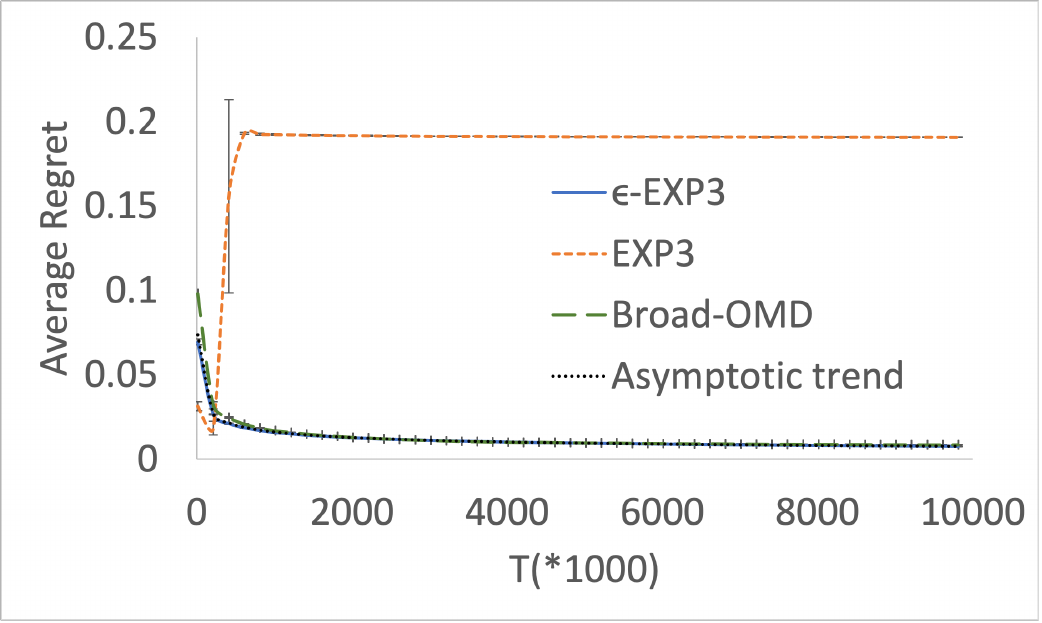}
       \label{fig:D2L2}
    }
    \subfigure[$D=2,L=3, p_{min}=0.4$]
    {
       \includegraphics[width=0.3\linewidth]{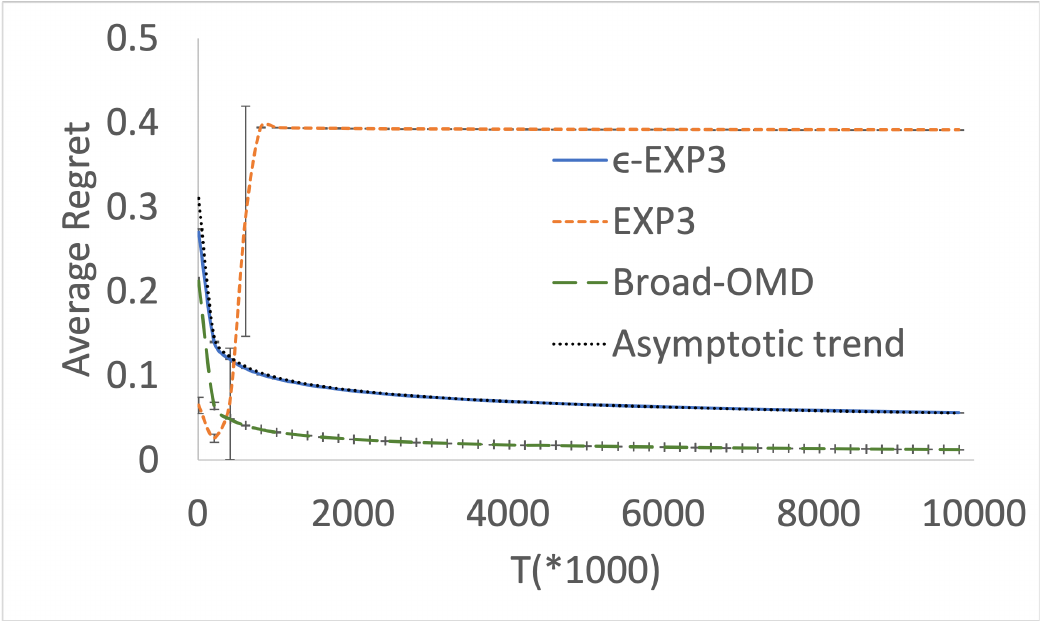}
       \label{fig:D2L3}
    }
    \subfigure[$D=2,L=4, p_{min}=0.6$]
    {
       \includegraphics[width=0.3\linewidth]{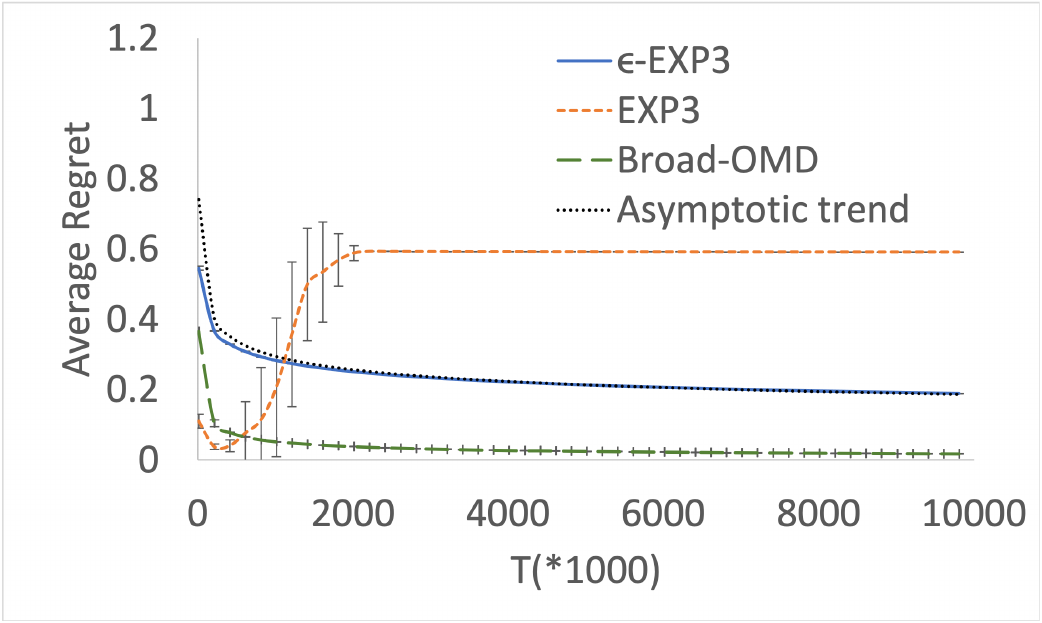}
       \label{fig:D2L4}
    }
    \subfigure[$D=4, L=2, p_{min}=0.2$]
    {
       \includegraphics[width=0.3\linewidth]{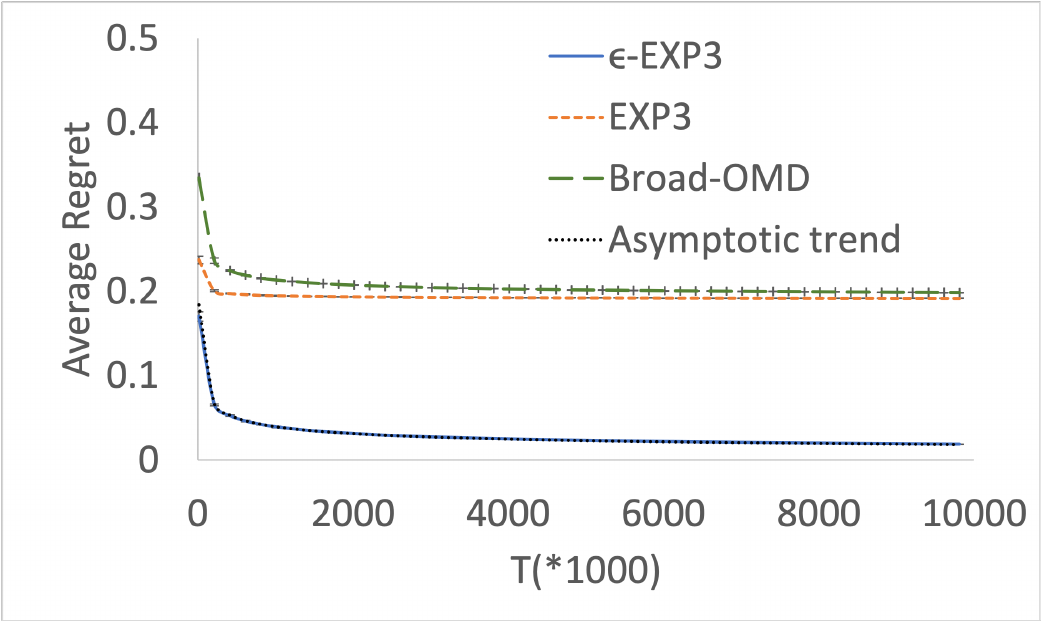}
       \label{fig:D4L2}
    }
    \subfigure[$D=4,L=3, p_{min}=0.4$]
    {
       \includegraphics[width=0.3\linewidth]{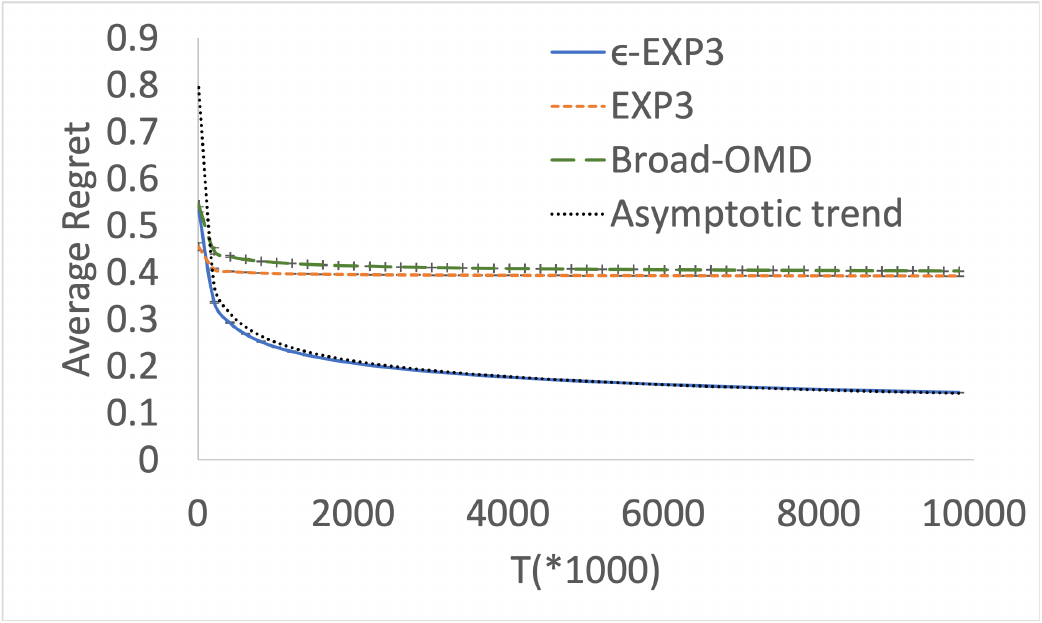}
       \label{fig:D4L3}
    }
    \subfigure[$D=4,L=4, p_{min}=0.6$]
    {
       \includegraphics[width=0.3\linewidth]{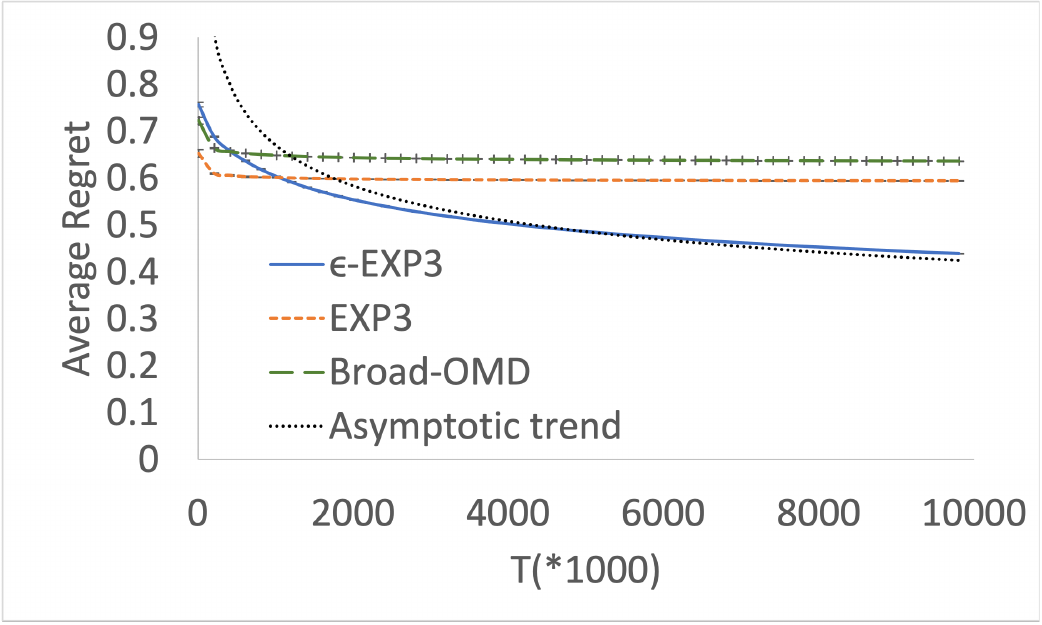}
       \label{fig:D4L4}
    }
    \end{center}
    \caption{Time-average regrets under various system parameters}
    \label{fig:D*L*}
\end{figure*}

We present our simulation results in this section. We simulate two different scenarios. The first scenario is based on trees whose leaf nodes generate Bernoulli costs. While this scenario is artificially constructed and may not correspond to real-world applications, its simulation results provide important insights on how online algorithms behave in distributed multi-stage systems. The second scenario is based on mobile edge computing. We compare our $\epsilon-$EXP3, with parameters from Lemma~\ref{lemma:exp3-one-step}, against the standard EXP3 algorithm, where each node runs the EXP3 algorithm independently from each other, and the Broad-OMD algorithm \cite{wei2018more}. Both EXP3 and Broad-OMD are no-regret policies for the special case when $L=1$.

\subsection{Trees with Bernoulli Costs} \label{section:simulation:tree}
We consider systems that can be represented as trees with depth $L+1$. Each non-leaf node has $D$ children. Each leaf node $j$ is associated with a parameter $p_j\in[0,1]$. Whenever a leaf node $j$ receives a job, its cost $c[j,t]$ is 1 with probability $p_j$ and 0 with probability $1-p_j$. The system is run over $T$ rounds. Initially, the values of $p_j$ is chosen so that $\max_j p_j = 1$ and $\min_j p_j = p_{min}$. At round $t=T/100$, the leaf with $p_j=1$ has its value of $p_j$ changed into 0. Fig. \ref{fig:simu_example} illustrates an example. For a given set of parameters $D, L, T$, and $[p_j]$, we simulate the system for 20 independent runs and calculate the time-average regret $\Big(\sum_{t=1}^Ty[r,t]-\min_{j\in\mathcal{L}}\sum_{t=1}^Tc[j,t]\Big)/T$ under all evaluated policies. 
\begin{figure}[h]
    \vspace{-10pt}
    \begin{center}
        \includegraphics[width=0.8\linewidth]{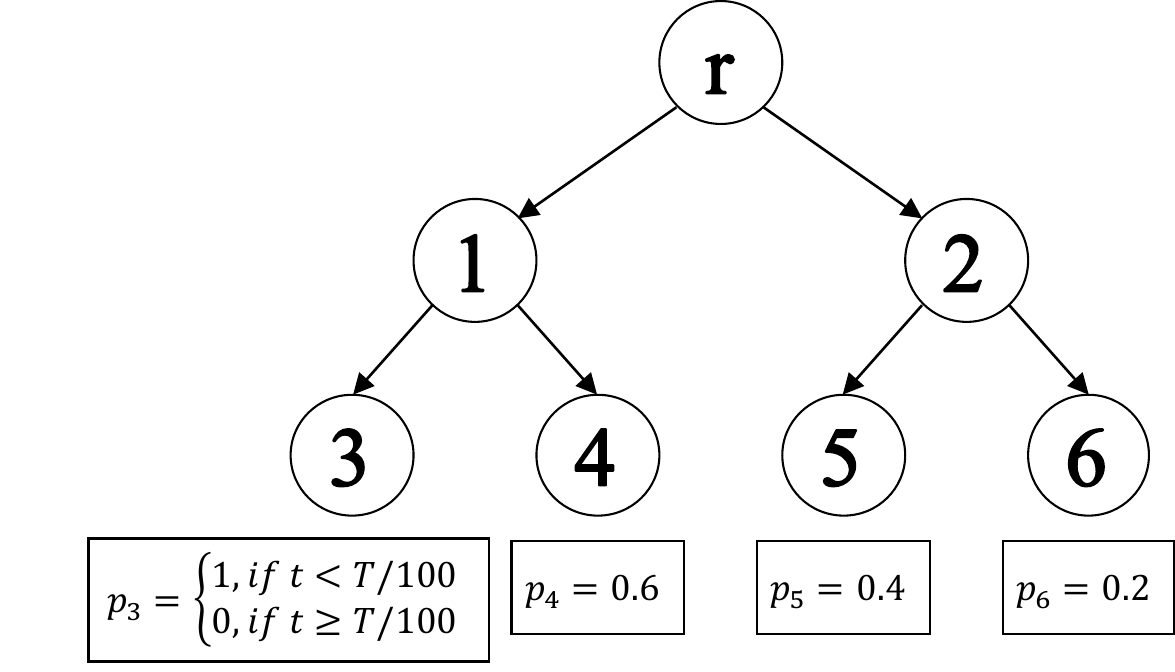}    
    \end{center}
    \caption{A system with $D=2, L=2$ and $p_{min}=0.2$}
    \label{fig:simu_example}
\end{figure}

Simulation results are shown in Fig.~\ref{fig:D*L*}, with the error bars indicating standard deviations. It can be observed that the time-average regret of $\epsilon-$EXP3 approaches 0 over time in all cases. We note that the convergence rate of $\epsilon-$EXP3 becomes much slower as $L$ becomes larger. This is consistent with Theorem~\ref{theorem:eps-EXP3 regret}, which shows that the time-average regret scales as $O(1/\sqrt[L+1]{T})$. To verify that the time-average regret of $\epsilon-$EXP3 scales as $O(1/\sqrt[L+1]{T})$, we also plot the \emph{asymptotic trend} in Fig.~\ref{fig:D*L*}. The value of the asymptotic trend for a particular $T$ is calculated as $R_{D,L}/\sqrt[L+1]{T}$, where $R_{D,L}$ is chosen so that the value of the asymptotic trend and the time-average regret of $\epsilon-$EXP3 are the same when $T=5\times10^6$, that is, at the mid-point of the x-axis in the figures. It can be observed that $\epsilon-$EXP3 is close to the asymptotic trend. This demonstrates that the time-average regret $\epsilon-$EXP3 indeed scales as $O(1/\sqrt[L+1]{T})$. 

On the other hand, it can also be observed that the time-average regrets of both EXP3 and Broad-OMD converge to $p_{min}$ in all settings in Fig.~\ref{fig:D*L*}. This result shows that neither of them is a no-regret policy in multi-stage systems. To understand why the standard EXP3 algorithm is not a no-regret policy, consider the system illustrated in Fig.~\ref{fig:simu_example}. Before round $t=\frac{T}{100}$, the optimal strategy for node 1 is to forward the job to node 4 with $p_4=0.6$. The optimal strategy for the root is to forward the job to node 2, who then forwards the job to node 6 with $p_6=0.2$. Hence, at round $t=\frac{T}{100}$ and under the EXP3 algorithm, the root will choose node 2 with a high probability and node 1 will choose node 4 with a high probability. Now, consider the first time after round $\frac{T}{100}$ when the root forwards a job to node 1. Since node 1 is unaware that $p_3$ has become 0, it chooses node 4 with a high probability and will likely incur a high cost. This high cost will cause the root to exponentially reduce the probability of choosing node 1 in the future, making it even harder for node 1 to explore and learn the fact that $p_3$ has become 0. This is why the EXP3 algorithm suffers from a time-average regret of roughly $p_6=0.2$. In contrast, our $\epsilon-$EXP3 algorithm ensures that the root always chooses node 1 with at least a constant probability in each round. This persistent education enables node 1 to eventually discover that $p_3$ has become 0. 

To demonstrate the behavior discussed in the above paragraph, we conduct a simulation to show the transient behaviors of the two algorithms. Specifically, we test the system shown in Fig.~\ref{fig:simu_example} with $T=5\times10^6$. The value of $p_3$ is initially 1, and becomes 0 at round $5\times 10^4$. For each algorithm, we record the probability that the root $r$ would choose node 1 and the probability that node 1 would choose node 3. Simulation results are shown in Fig.~\ref{fig:transient}, where each data point represent the average of the previous 1000 rounds. Under the EXP3 algorithm, the probability that the root would choose node 1 at round $5\times 10^4$ is less than $0.05\%$. Since node 1 rarely receives any jobs, it cannot improve its performance, which, in turn, makes the root even less likely to choose node 1. At round $5\times 10^5$, probability that the root would choose node 1 has become less than $0.02\%$. In contrast, the $\epsilon-$EXP3 algorithm offers persistent education to node 1. Hence, after round $5\times 10^4$, node 1 quickly finds that $p_3$ has improved and increases its probability of choosing node 3. As a result, the root also starts increasing its probability of choosing node 1 after round $10^5$. 

\begin{figure}
    \begin{center}
    \subfigure[The behavior of $\epsilon-$EXP3]
    {
       \includegraphics[width=0.45\linewidth]{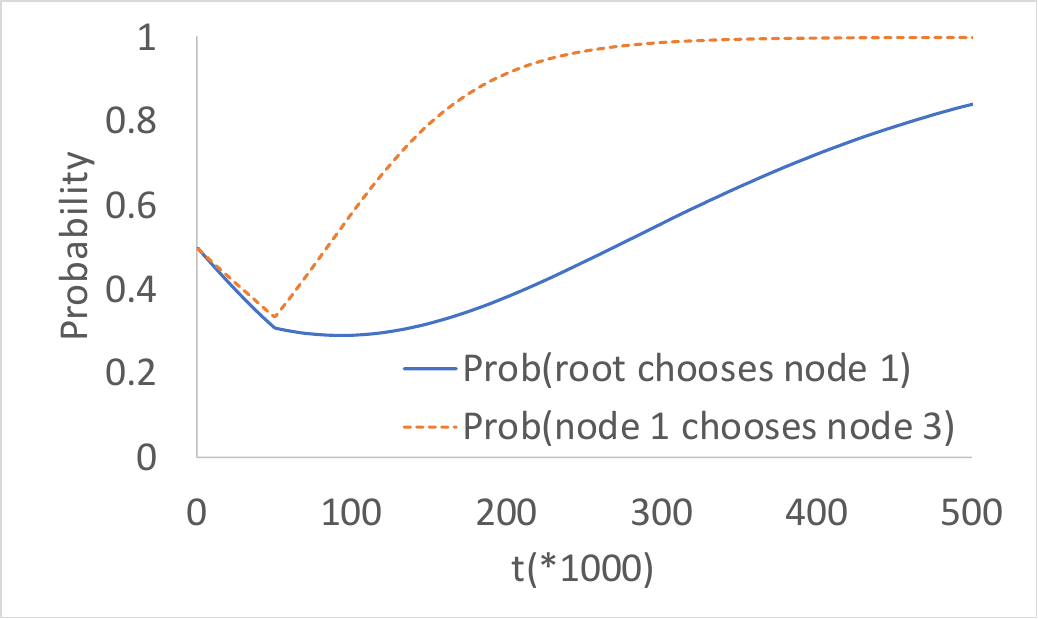}
       \label{fig:transient_epsilon}
    }
    \subfigure[The behavior of EXP3]
    {
       \includegraphics[width=0.45\linewidth]{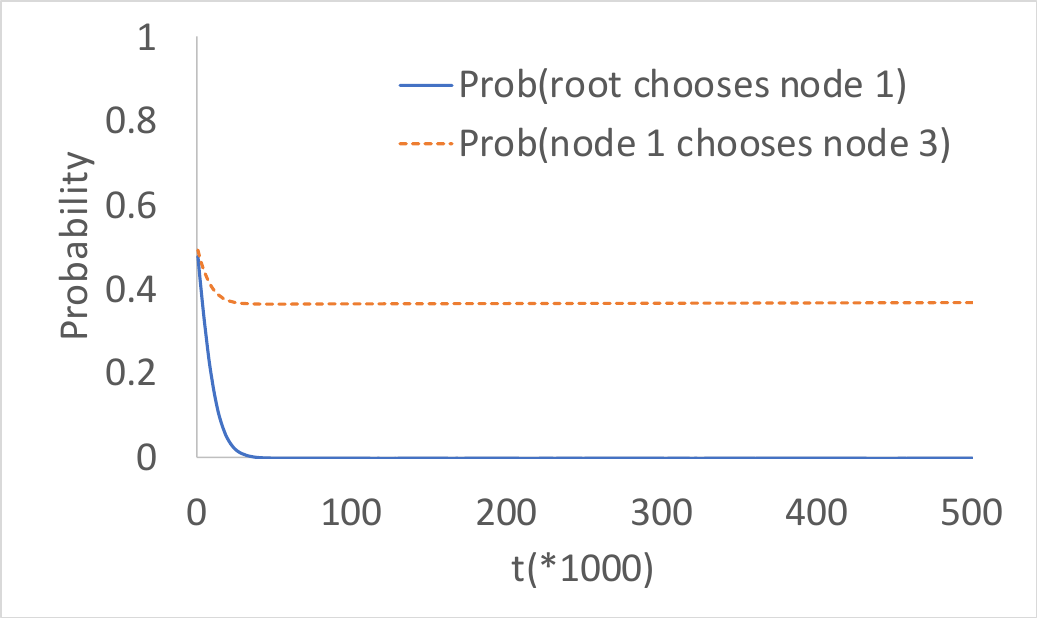}
       \label{fig:transient_exp3}
    }
    \end{center}
    \caption{Transient behaviors of the system in Fig.~\ref{fig:simu_example} with $T=5\times10^6$.}
    \label{fig:transient}
\end{figure}

\subsection{Mobile Edge Computing}

We consider a mobile edge computing system. In this system, there is a mobile robot that generates video analytic jobs for real-time processing. The robot is connected to $D$ edge servers with different communication media. To process a job, each edge server has $D$ different neural networks to choose from. Different neural networks have different precision and different processing time. 
There is also a communication latency of each link. The delay of transmitting over a link is an exponential function with mean $\frac{1}{\lambda}$. Some links have a constant $\lambda$ while other links have a $\lambda$ that increases over time. This models the time-varying congestion on these links. Fig.~\ref{fig:mec_system} illustrates the system when $D=2$.

\begin{figure*}[h]
    \centering
    \subfigure[System illustration when $D=2$]
    {
    \includegraphics[width=0.35\linewidth]{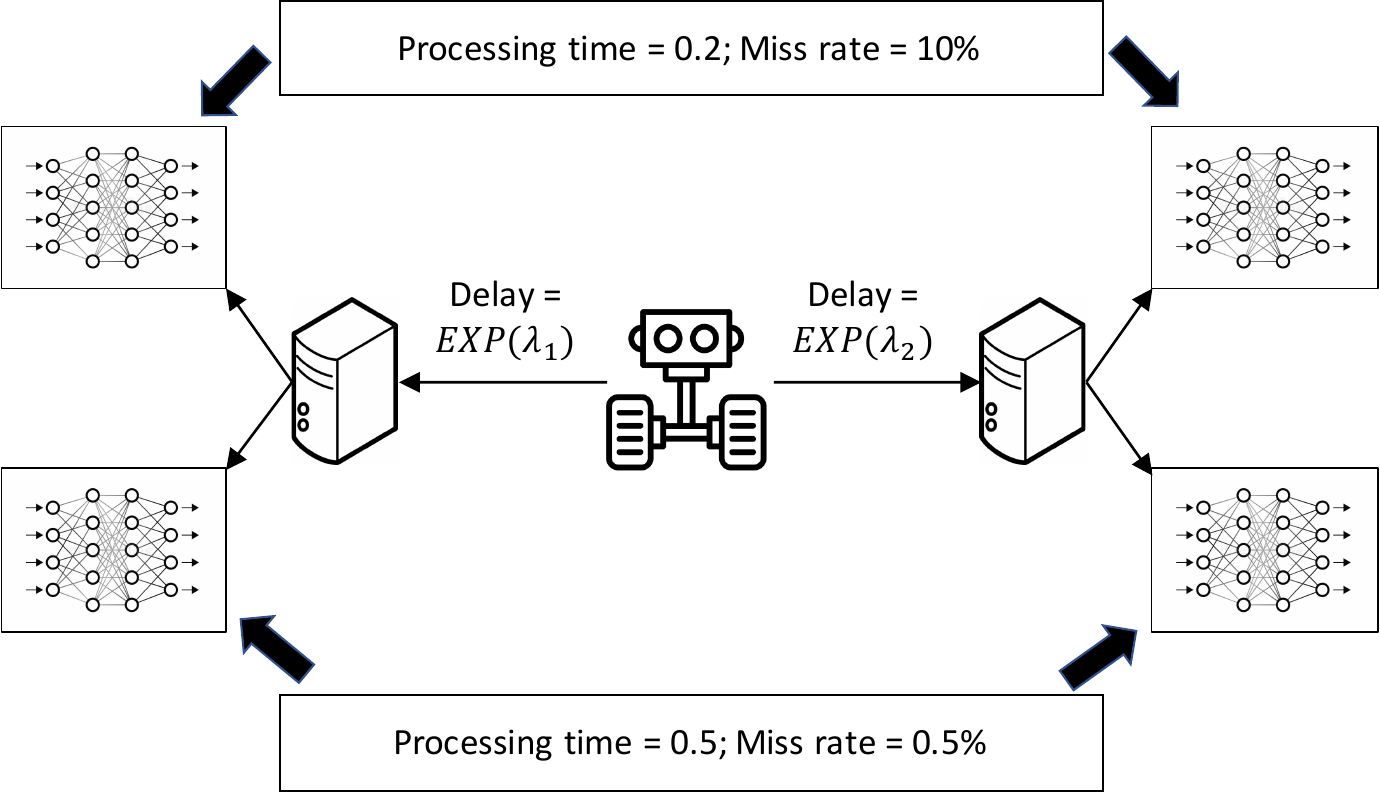}
    \label{fig:mec_system}
    }
    \subfigure[Results for $D=2$]
    {
    \includegraphics[width=0.28\linewidth]{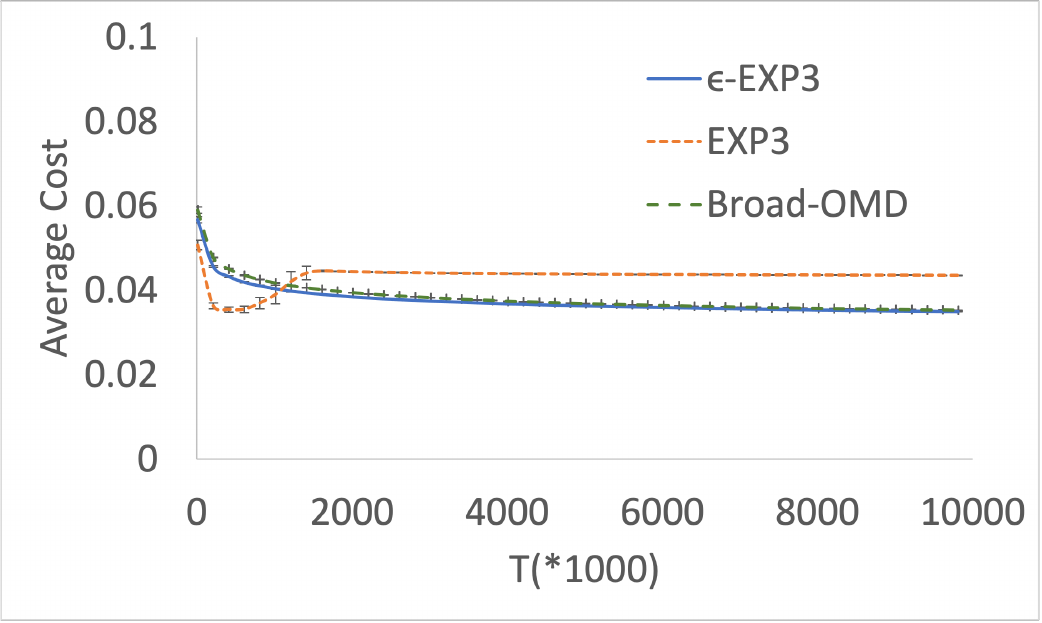}
    \label{fig:mec_result}
    }
    \subfigure[Results for $D=3$]
    {
    \includegraphics[width=0.28\linewidth]{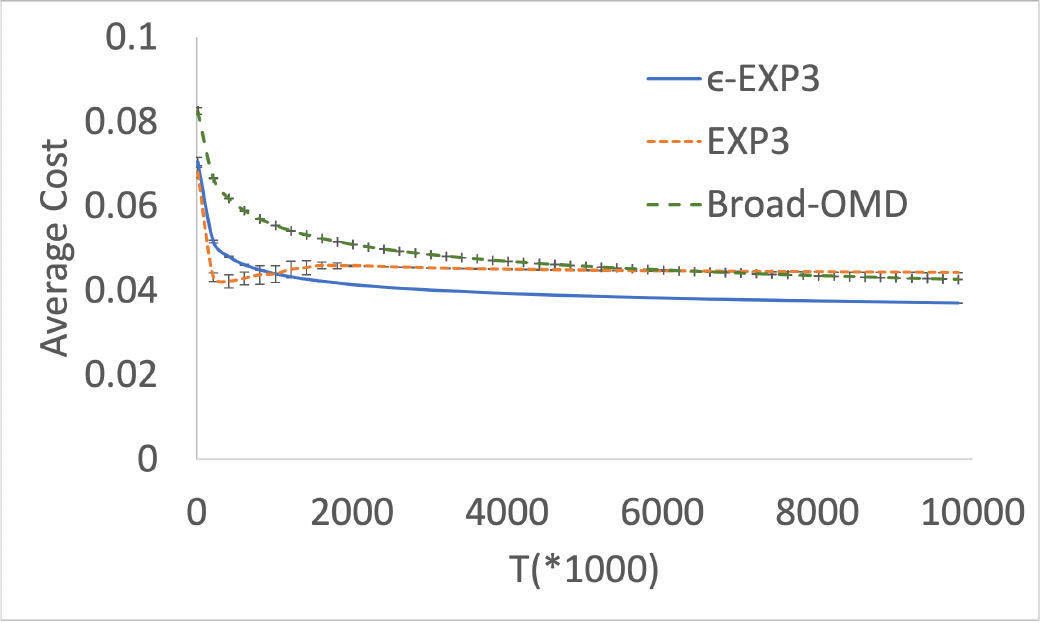}
    \label{fig:mec_result}
    }
    \caption{Setting and result of a mobile edge computing system}
    \label{fig:mec}
\end{figure*}

The robot requires a strict deadline of one time unit for each job. If the end-to-end latency, that is, the sum of communication latency and processing time, exceeds one time unit, then a deadline violation occurs and the cost is one. If the end-to-end latency is less than one time unit, then the cost is the miss rate of the employed neural network.

We have conducted 20 independent runs for each $T$. Simulation results are shown in Fig.~\ref{fig:mec}. It can be observed that the $\epsilon-$EXP3 algorithm significantly outperforms the EXP3 algorithm when $T$ is sufficiently large. The Broad-OMD algorithm has similar performance as $\epsilon-$EXP3 when $D=2$, but is much worse than $\epsilon-$EXP3 when $D=3$.

\subsection{Multi-hop Networks}
We consider multi-hop networks as illustrated in Fig.~\ref{fig:net_system}. In this system, the source (node $S$) is sending packets to the destination (node $D$) through a number of inter-connected relay nodes. Upon receiving a packet, a node needs to decide which node to forward the packet to. The delay of transmitting over a link is an exponential function with mean $\frac{1}{\lambda}$. Some links have a constant $\lambda$ while other links have a $\lambda$ that increases over time. We consider that the source requires a strict end-to-end deadline guarantee of one unit time. If the end-to-end delay of a packet is more than one unit time, then a deadline violation occurs.

\begin{figure*}
    \centering
    \subfigure[System illustration]
    {
    \includegraphics[width=0.3\linewidth]{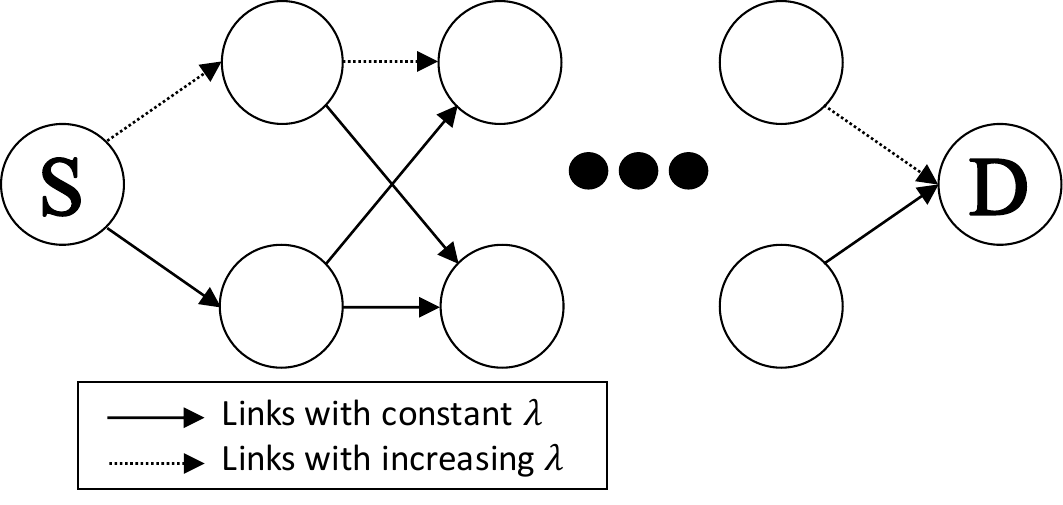}
    \label{fig:net_system}
    }
    \subfigure[Results for $L=2$]
    {
    \includegraphics[width=0.3\linewidth]{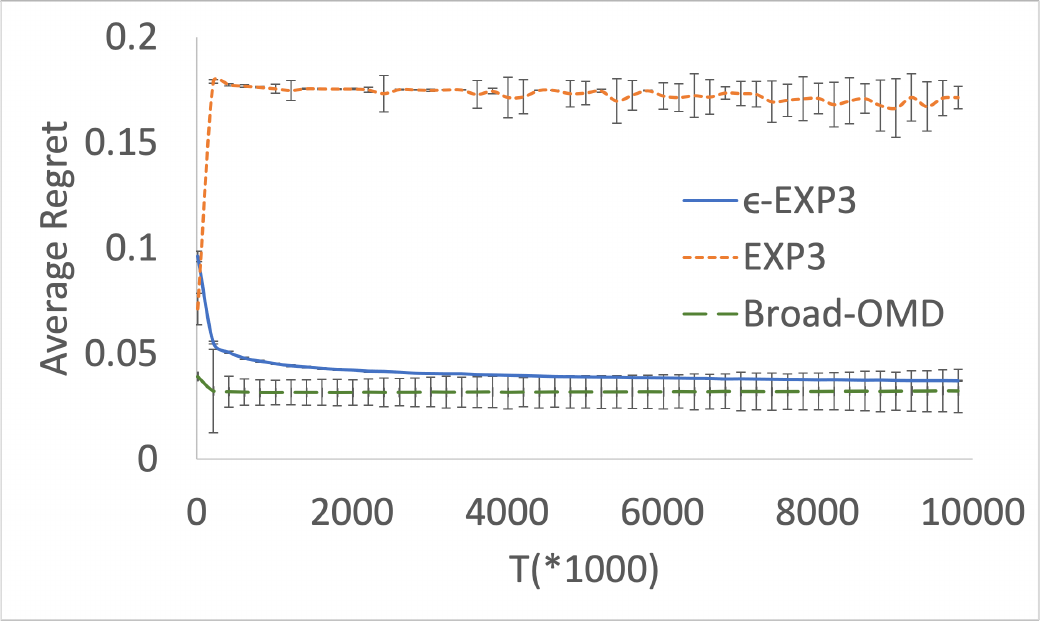}
    \label{fig:net_L2}
    }
    \subfigure[Results for $L=3$]
    {
    \includegraphics[width=0.3\linewidth]{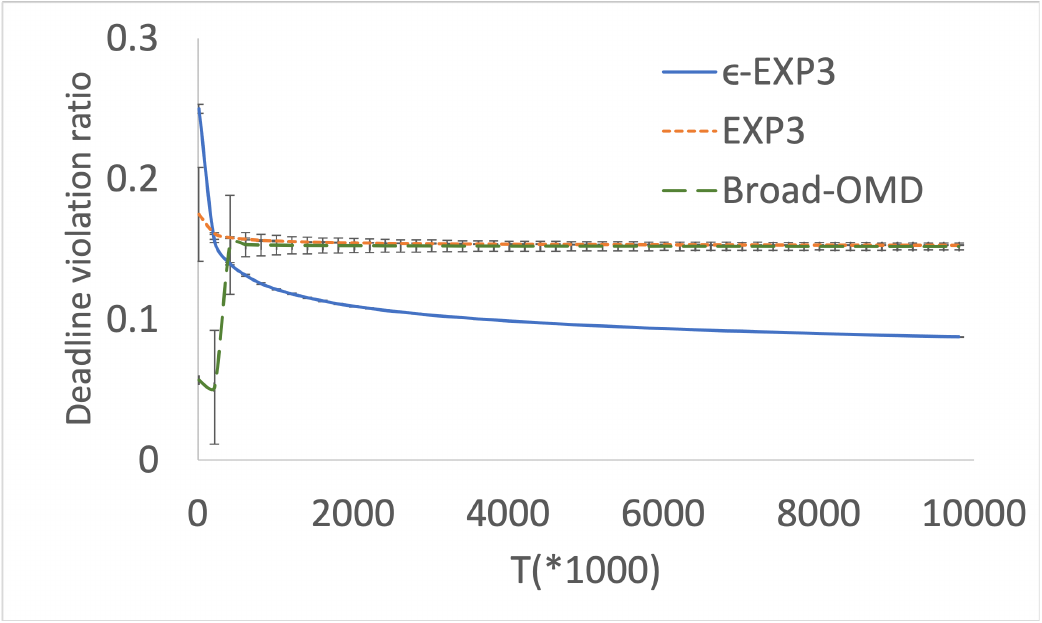}
    \label{fig:net_L3}
    }
    \caption{Setting and result of multi-hop networks}
    \label{fig:net}
\end{figure*}

Let $L$ be the number of relay nodes that a packet needs to visit before reaching the destination. We have tested this system for different values of $L$. Simulation results are shown in Fig.~\ref{fig:net}. It can be observed that the $\epsilon-$EXP3 algorithm is either optimal or near-optimal in all settings.

\section{Conclusion} \label{section:conclusion}

In this paper, we study multi-stage systems with end-to-end bandit feedback. The fundamental challenge of learning the optimal policy of agents in each stage is a newly introduced exploration-exploitation-education trilemma. We propose a simple distribute policy, the $\epsilon-$EXP3 algorithm, that explicitly addresses this trilemma. Moreover, we theoretically prove that the $\epsilon-$EXP3 algorithm is a no-regret policy. Simulation results show that the $\epsilon-$EXP3 algorithm significantly outperforms existing policies.

\section{Acknowledgement}
This material is based upon work supported in part by NSF under Award Numbers ECCS-2127721 and CCF-2332800 and in part by the U.S. Army Research Laboratory and the U.S. Army Research Office under Grant Number W911NF-22-1-0151.

\bibliographystyle{ACM-Reference-Format}
\bibliography{reference}
\newpage
\appendix
\section{Proof of Theorem \ref{theorem:regret of normalized-EG}}\label{appendix:thm1}
\begin{proof}
We will prove the theorem by establishing the following statement: If a node $i$ is $(L-h)$-hops from the root node $r$, then $\sum_{t=1}^T E\Big[y_n[i,t]\Big]\leq \sum_{t=1}^Ty_*[i,t]+2h\sqrt{T\log D}$.

We prove the statement by induction. First, consider the case $h=1$, that is, the node $i$ is $(L-1)$-hops from $r$. Since the tree has depth $L+1$, either $i$ is a leaf node or all children of $i$ are leaf nodes. If $i$ is a leaf node, then $y_n[i,t]=y_*[i,t]=c[i,t]\in[0,1]$ and the statement holds. If all children of $i$ are leaf nodes, then we have $y_n[j,t]=y_*[j,t]=c[j,t]$ for all $j\in\mathcal{C}_i$. Hence, by Lemma~\ref{lemma:omd}, 
\begin{align*}
    &\sum_{t=1}^T E\Big[y_n[i,t]\Big]=\sum_{t=1}^T E\Big[y_n[i,t]\Big|\mathcal{Y}_n[i,t]\Big]\\
    \leq& \min_{j\in\mathcal{C}_i}\sum_{t=1}^Ty_n[j,t]+2\sqrt{T\log |\mathcal{C}_i|}\\
    \leq& \min_{j\in\mathcal{C}_i}\sum_{t=1}^Ty_*[j,t]+2\sqrt{T\log D}
    =\sum_{t=1}^Ty_*[i,t]+2\sqrt{T\log D},
\end{align*} 
and the statement holds.

We now assume that the statement holds when $h=g$ and consider a node $i$ that is $(L-(g+1))$-hops from $r$. Either $i$ is a leaf node or all children of $i$ are $(L-g)$-hops from $r$. If $i$ is a leaf node, then the statement clearly holds. If $i$ is not a leaf node, then, by the induction hypothesis, we have $\sum_{t=1}^T E\Big[y_n[j,t]\Big]\leq \sum_{t=1}^Ty_*[j,t]+2g\sqrt{T\log D}$, for all $j\in\mathcal{C}_i$. Since $y_{n}[j,t]\in[0,1]$, we can use Lemma~\ref{lemma:omd} to establish the following:
\begin{align*}
    &\sum_{t=1}^T E\Big[y_n[i,t]\Big]=\sum_{t=1}^T E\Big[E\Big[y_n[i,t]\Big|\mathcal{Y}_n[i,t]\Big]\Big]\\
    \leq &E\Big[\min_{j\in\mathcal{C}_i}\sum_{t=1}^Ty_n[j,t]\Big]+2\sqrt{T\log |\mathcal{C}_i|}\\
    \leq &\min_{j\in\mathcal{C}_i}\sum_{t=1}^TE\Big[y_n[j,t]\Big]+2\sqrt{T\log D}\\
    \leq& \min_{j\in\mathcal{C}_i}\sum_{t=1}^Ty_*[j,t]+2(g+1)\sqrt{T\log D}\\
    =&\sum_{t=1}^Ty_*[i,t]+2(g+1)\sqrt{T\log D},
\end{align*}
and the statement holds. By induction, the statement holds for all $h$.

Since the root node $r$ is 0-hop from itself, we have $\sum_{t=1}^T E\Big[y_n[r,t]\Big]\leq \sum_{t=1}^Ty_*[r,t]+2L\sqrt{T\log D}=\min_{j\in\mathcal{L}}\sum_{t=1}^Tc_{j,t}+2L\sqrt{T\log D}.$
\end{proof}

\section{Proof of Lemma~\ref{lemma:z[j,t]}}\label{appendix:lemma2}
\begin{proof}
Under the $\epsilon-$EXP3 algorithm, $z[j,t]\neq0$ only when node $i$ receives a job, which happens with probability $v[i,t]$, and $i$ chooses $f[i,t]=j$, whose probability depends on $m[i,t]$. Hence,
\begin{align*}
    &E\Big[z[j,t]\Big|\mathcal{Y}_\epsilon[i,t],\mathcal{Z}[i,t-1]\Big]\\
    =&\epsilon_i E\Big[z[j,t]\Big|m[i,t]=U,\mathcal{Y}_\epsilon[i,t],\mathcal{Z}[i,t-1]\Big]\\
    &+(1-\epsilon_i)E\Big[z[j,t]\Big|m[i,t]=E,\mathcal{Y}_\epsilon[i,t],\mathcal{Z}[i,t-1]\Big]\\
    =&\epsilon_i v[i,t]\frac{1}{|\mathcal{C}_i|}\frac{y_\epsilon[j,t]|\mathcal{C}_i|}{v[i,t]}\\
    &+(1-\epsilon_i)v[i,t]\frac{e^{\eta_i\theta[i,j,t]}}{\sum_{k\in\mathcal{C}_i}e^{\eta_i\theta[i,k,t]}}\frac{y_\epsilon[j,t]\sum_{k\in\mathcal{C}_i}e^{\eta_i\theta[i,k,t]}}{v[i,t]e^{\eta_i\theta[i,j,t]}}\\
    =&y_\epsilon[j,t],
\end{align*}
and
\begin{align*}
    &E\Big[z[j,t]^2\Big|\mathcal{Y}_\epsilon[i,t],\mathcal{Z}[i,t-1]\Big]\\
    =&\epsilon_i E\Big[z[j,t]^2\Big|m[i,t]=U,\mathcal{Y}_\epsilon[i,t],\mathcal{Z}[i,t-1]\Big]\\
    &+(1-\epsilon_i)E\Big[z[j,t]^2\Big|m[i,t]=E,\mathcal{Y}_\epsilon[i,t],\mathcal{Z}[i,t-1]\Big]\\
    =&\epsilon_i v[i,t]\frac{1}{|\mathcal{C}_i|}\frac{y_\epsilon[j,t]^2|\mathcal{C}_i|^2}{v[i,t]^2}\\
    &+(1-\epsilon_i)v[i,t]\frac{e^{\eta_i\theta[i,j,t]}}{\sum_{k\in\mathcal{C}_i}e^{\eta_i\theta[i,k,t]}}\frac{y_\epsilon[j,t]^2(\sum_{k\in\mathcal{C}_i}e^{\eta_i\theta[i,k,t]})^2}{v[i,t]^2e^{2\eta_i\theta[i,j,t]}}\\
    =&\Big(\epsilon_i|\mathcal{C}_i|+(1-\epsilon_i) \frac{\sum_{k\in\mathcal{C}_i}e^{\eta_i\theta[i,k,t]}}{e^{\eta_i\theta[i,j,t]}}\Big)\frac{y_\epsilon[j,t]^2}{v[i,t]}.
\end{align*}
\end{proof}

\section{Proof of Lemma~\ref{lemma:y_n vs y_e}}\label{appendix:lemma3}
\begin{proof}
    Since both $\epsilon-$EXP3 and normalized-EG update $\theta[i,j,t]$ by $\theta[i,j,t+1]=\theta[i,j,t]-z[j,t]$, they have the same values of $\theta[i,j,t]$ on every sample path.

By the design of the $\epsilon-$EXP3 algorithm, we have
\begin{align*}
&E\Big[y_\epsilon[i,t]\Big|m[i,t]=E,\mathcal{Y}_\epsilon[i,t],\mathcal{Z}[i,t-1]\Big]\\
=&\sum_{j\in\mathcal{C}_i} y_\epsilon[j,t]\frac{e^{\eta_i\theta[i,j,t]}}{\sum_{k\in\mathcal{C}_i}e^{\eta_i\theta[i,k,t]}}.
\end{align*}

Under the normalized-EG algorithm, we have $y_n[i,t]=z[j,t]$ with probability $\frac{e^{\eta_i\theta[i,j,t]}}{\sum_{k\in\mathcal{C}_i}e^{\eta_i\theta[i,k,t]}}$. Hence,
\begin{align*}
    &E\Big[E\Big[y_n[i,t]\Big|\mathcal{Y}_n[i,t]=\mathcal{Z}[i,t]\Big]\Big]\\
    =&\sum_{j\in\mathcal{C}_i}\frac{e^{\eta_i\theta[i,j,t]}}{\sum_{k\in\mathcal{C}_i}e^{\eta_i\theta[i,k,t]}}E\Big[z[j,t]\Big|\mathcal{Y}_\epsilon[i,t],\mathcal{Z}[i,t-1]\Big]\\
    =&\sum_{j\in\mathcal{C}_i} y_\epsilon[j,t]\frac{e^{\eta_i\theta[i,j,t]}}{\sum_{k\in\mathcal{C}_i}e^{\eta_i\theta[i,k,t]}}. \hspace{20pt} (\because\mbox{Eq. (\ref{equation:z[j,t]})})
\end{align*}
This completes the proof.
\end{proof}

\section{Proof of Lemma~\ref{lemma:exp3-one-step}}\label{appendix:lemma4}
\begin{proof}
    The normalized-EG algorithm sets $x[i,j,t]=\frac{e^{\eta_i\theta[i,j,t]}}{\sum_{k\in\mathcal{C}_i}e^{\eta_i\theta[i,k,t]}}$. By Lemma \ref{lemma:omd} and the fact that $y_\epsilon[j,t]\in[0,1],\forall j,t$, we have
\begin{align*}
    &\sum_{t=1}^T E\Big[E\Big[y_n[i,t]\Big|\mathcal{Y}_n[i,t]=\mathcal{Z}[i,t]\Big]\Big]\nonumber\\
    \leq& E\Big[\min_{j\in\mathcal{C}_i}\sum_{t=1}^Tz[j,t]\Big]+\frac{\log |\mathcal{C}_i|}{\eta_i}\\
    &+\eta_i\sum_{t=1}^T\sum_{j\in\mathcal{C}_i}\frac{e^{\eta_i\theta[i,j,t]}}{\sum_{k\in\mathcal{C}_i}e^{\eta_i\theta[i,k,t]}}E\Big[{z[j,t]}^2\Big]\\
    \leq&\min_{j\in\mathcal{C}_i}\sum_{t=1}^Ty_\epsilon[j,t]+\frac{\log |\mathcal{C}_i|}{\eta_i}+\eta_i\sum_{t=1}^T\frac{|\mathcal{C}_i|}{v[i,t]}\nonumber&(\because\mbox{Lemma \ref{lemma:z[j,t]}})\\
    \leq&\min_{j\in\mathcal{C}_i}\sum_{t=1}^Ty_\epsilon[j,t]+\frac{\log D}{\eta_i}+\eta_i\sum_{t=1}^T\frac{D}{v[i,t]}.
\end{align*}
Combining the above inequality with Lemma~\ref{lemma:y_n vs y_e} and we have
\begin{align*}
    &\sum_{t=1}^TE\Big[y_\epsilon[i,t]\Big|m[i,t]=E,\mathcal{Y}_\epsilon[i,t],\mathcal{Z}[i,t-1]\Big]\\
\leq&\min_{j\in\mathcal{C}_i}\sum_{t=1}^Ty_\epsilon[j,t]+\frac{\log D}{\eta_i}+\eta_i\sum_{t=1}^T\frac{D}{v[i,t]}.
\end{align*}
Moreover, since $y_\epsilon[j,t]\in[0,1]\forall j,t$, we clearly have
\begin{align*}
    &\sum_{t=1}^TE\Big[y_\epsilon[i,t]\Big|m[i,t]=U,\mathcal{Y}_\epsilon[i,t],\mathcal{Z}[i,t-1]\Big]\\
    \leq&\min_{j\in\mathcal{C}_i}\sum_{t=1}^Ty_\epsilon[j,t]+T.
\end{align*}

Since $m[i,t]=U$ with probability $\epsilon_i$ and $m[i,t]=E$ with probability $1-\epsilon_i$, we now have
\begin{align*}
    &\sum_{t=1}^TE\Big[y_\epsilon[i,t]\Big|\mathcal{Y}_\epsilon[i,t],\mathcal{Z}[i,t-1]\Big]\\
    \leq&\min_{j\in\mathcal{C}_i}\sum_{t=1}^Ty_\epsilon[j,t]+\epsilon_i T+(1-\epsilon_i)(\frac{\log D}{\eta_i}+\eta_i\sum_{t=1}^T\frac{D}{v[i,t]}),
\end{align*}
and hence 
\begin{align*}
    &\sum_{t=1}^TE\Big[y_\epsilon[i,t]\Big|\mathcal{Y}_\epsilon[i,t]\Big]\\
    \leq&\min_{j\in\mathcal{C}_i}\sum_{t=1}^Ty_\epsilon[j,t]+\epsilon_i T+\frac{\log D}{\eta_i}+\eta_i\sum_{t=1}^T\frac{D}{v[i,t]},
\end{align*}
which proves the first part of the theorem.

Moreover, if we choose $\eta_i=T^{-\frac{L}{L+1}}$ for all $i$ and set $\epsilon_i$ to be 0 if $C_i\subset\mathcal{L}$, and $DT^{-\frac{1}{L+1}}$ otherwise, then $x[i,j,t]\geq\frac{\epsilon_i}{|\mathcal{C}_i|}\geq T^{-\frac{1}{L+1}}$ for all $i,j,t$. Since $v[r,t]=1$ for the root node $r$ and each non-leaf node $i$ is at most $(L-1)-$hops from $r$, we have $v[i,t]\geq T^{-\frac{L-1}{L+1}}$. Putting these into the above inequality and we have

\begin{align*}
    &\sum_{t=1}^TE\Big[y_\epsilon[i,t]\Big|\mathcal{Y}_\epsilon[i,t]\Big]-\min_{j\in\mathcal{C}_i}\sum_{t=1}^Ty_\epsilon[j,t]\\
    \leq& \begin{cases}
    T^{\frac{L}{L+1}}\log D+DT^{\frac{L}{L+1}}, &\textbf{if } C_i\subset\mathcal{L},\\
    DT^{\frac{L}{L+1}}+T^{\frac{L}{L+1}}\log D+DT^{\frac{L}{L+1}} &\textbf{else}.
    \end{cases}
\end{align*}
This completes the proof.
\end{proof}

\end{document}